\newcommand{\ind}{\perp}
\newcommand{\dep}{\not\perp}
\newcommand{\condind}[3]{(#1 {\ind} #2 \ | \ #3)}
\newcommand{\conddep}[3]{(#1 {\dep} #2 \ | \ #3)}
\newcommand{\FBS}[0]{FBS}
\newcommand{\fFBS}[1]{FBED$^#1$}
\newcommand{\datasetref}[8]{\multirow{2}{*}{#1} & \multirow{2}{*}{#2} & \multirow{2}{*}{#3} & \multirow{2}{*}{#4} & \multirow{2}{*}{#5} & \multirow{2}{*}{#6} & #7 \\
& & & & & & #8 \\}
\DeclareMathOperator*{\argmax}{argmax}
\newcommand{\algrule}[1][.2pt]{\par\vskip.5\baselineskip\hrule height #1\par\vskip.5\baselineskip}%
\begin{document}

\title{Forward-Backward Selection with Early Dropping}

\author{\name Giorgos Borboudakis \email borbudak@gmail.com \\
       \addr Computer Science Department, University of Crete\\
       Gnosis Data Analysis IKE
       \AND
       \name Ioannis Tsamardinos \email tsamard.it@gmail.com \\
       \addr Computer Science Department, University of Crete\\
       Gnosis Data Analysis IKE}
       
\editor{}
\maketitle

\begin{abstract}
Forward-backward selection is one of the most basic and commonly-used feature selection algorithms available. 
It is also general and conceptually applicable to many different types of data. 
In this paper, we propose a heuristic that significantly improves its running time, while preserving predictive accuracy.
The idea is to temporarily discard the variables that are conditionally independent with the outcome given the selected variable set. 
Depending on how those variables are reconsidered and reintroduced, this heuristic gives rise to a family of algorithms with increasingly stronger theoretical guarantees.
In distributions that can be faithfully represented by Bayesian networks or maximal ancestral graphs, members of this algorithmic family are able to correctly identify the Markov blanket in the sample limit.
In experiments we show that the proposed heuristic increases computational efficiency by about two orders of magnitude in high-dimensional problems, while selecting fewer variables and retaining predictive performance.
Furthermore, we show that the proposed algorithm and feature selection with LASSO perform similarly when restricted to select the same number of variables, making the proposed algorithm an attractive alternative for problems where no (efficient) algorithm for LASSO exists.
\end{abstract}

\section{Introduction}

The problem of feature selection (a.k.a. variable selection) in supervised learning tasks can be defined as the problem of selecting a minimal-size subset of the variables that leads to an optimal, multivariate predictive model for a target variable (outcome) of interest \citep{Tsamardinos2003}. 
Thus, the feature selection's task is to filter out irrelevant variables and variables that are superfluous given the selected ones (that is, weakly relevant variables, see \citep{John94, Tsamardinos2003}).

Solving the feature selection problem has several advantages. 
Arguably, the most important one is knowledge discovery: by removing superfluous variables it improves intuition and understanding about the data-generating mechanisms. 
This is no accident as solving the feature selection problem has been linked to the data-generating causal network \citep{Tsamardinos2003}. 
In fact, {\em it is often the case that the primary goal of data analysis is feature selection} and not the actual resulting predictive model. 
This is particularly true in medicine and biology where the features selected may direct future experiments and studies. 
Feature selection is also employed to reduce the cost of measuring the features to make operational a predictive model; for example, it can reduce the monetary cost or inconvenience to a patient of applying a diagnostic model by reducing the number of medical tests and measurements required to perform on a subject for providing a diagnosis. 
Feature selection also often improves the predictive performance of the resulting model in practice, especially in high-dimensional settings. 
This is because a good-quality selection of features facilitates modeling, particularly for algorithms susceptible to the curse of dimensionality.
There has been a lot of research on feature selection methods in the statistical and machine learning literature.
An introduction to the topic, as well as a review of many, prominent methods can be found in \citep{Guyon2003}, while the connections between feature selection, the concept of relevancy, and probabilistic graphical models is in \citep{John94, Tsamardinos2003}. 

We will focus on forward and backward selection algorithms, which are specific instances of stepwise methods \citep{AppliedLinearStatisticalModels, WeisbergAppliedLinearRegression}. 
These methods are some of the oldest, simplest and most commonly employed feature selection methods. 
In computer science they have re-appeared in the context of Markov blanket discovery and Bayesian network learning \citep{Margaritis2000, Tsamardinos2003IAMB, Margaritis2009}. 
An attractive property of stepwise methods is that they are very general, and are applicable to different types of data.
For instance, stepwise methods using conditional independence tests or information criteria can be directly applied to (a) mixed continuous and categorical predictors, (b) cross-sectional or time course data, (c) continuous, nominal, ordinal or time-to-event outcomes, among others, (d) with non-linear tests, such as kernel-based methods \citep{Zhang2011}, and (e) to heteroscedastic data using robust tests; many of the aforementioned tests, along with others have been implemented in the MXM R package \citep{Lagani2016}.

Forward selection has several issues. 
First, it can be relatively slow, performing $O(p k)$ tests for variable inclusion, where $p$ and $k$ are the total number of variables and the number of selected variables respectively.
This is acceptable for low-dimensional datasets, but becomes unmanageable with increasing dimensionality.
Second, forward selection suffers from multiple testing problems and thus may select a large number of irrelevant variables \citep{Flom2007}.

In this work we extend the forward selection algorithm to deal with the problems above. 
In Section~\ref{sec:impr_fs} we propose a heuristic to reduce its computational cost without sacrificing quality, while also selecting fewer variables and reducing multiple testing issues.
The idea is, in each iteration of the forward search, to filter out variables that are deemed conditionally independent of the target given the current set of selected variables. 
After termination, the algorithm is allowed to run up to $K$ additional times, every time initializing the set of selected variables to the ones selected in the previous run.
Finally, backward selection is applied on the selected variables.
We call this algorithm \textbf{Forward-Backward selection with Early Dropping} (\fFBS{K}). 
This heuristic is inspired by the theory of Bayesian networks and maximal ancestral graphs \citep{Spirtes2000, SpirtesRichardson2002}, and similar ideas have been successfully applied by other feature selection methods \citep{Aliferis2010JMLR}.
In Section~\ref{sec:mbfbs} we show that (a) \fFBS{0} returns a superset of the adjacent nodes in any Bayesian network or maximal ancestral graph that faithfully represents the data distribution (if there exists one and assuming perfect statistical independence tests), (b) \fFBS{1} returns the Markov blanket of the data distribution, provided the distribution is faithful to a Bayesian network, and (c) \fFBS{\infty} returns the Markov blanket of the data distribution provided the distribution is faithful to a maximal ancestral graph, or equivalently, it is faithful to a Bayesian network where some variables are unobserved (latent).
In the experimental evaluation presented in Section~\ref{sec:exp:ffbs}, we show that \fFBS{0} and \fFBS{1} result in predictive models that are on par with the other methods and perform about 1-2 orders of magnitude fewer conditional independence tests than standard forward selection.
Furthermore, we show that \fFBS{K} performs equally well to feature selection with LASSO \citep{Tibshirani1996} feature selection, if both algorithms select the same number of variables.

\section{Notation and Preliminaries}
\label{sec:bg}

We start by introducing the notation and terminology used throughout the paper.
We use upper-case letters to denote single variables (for example, $X$), and bold upper-case letters to denote sets of variables (for example, $\mathbf{Z}$).
We will use $|\mathbf{X}|$ to refer to the number of elements contained in set $\mathbf{X}$.
The terms variable, feature or predictor will be used interchangeably.
We will use $p$ and $n$ to refer to the number of variables and samples in a dataset $\mathcal{D}$ respectively.
The set of variables in $\mathcal{D}$ will be denoted as $\mathbf{V_{\mathcal{D}}}$.
The target variable (also called outcome) will be referred to as $T$.
Next, we proceed with the basics about stepwise feature selection methods \citep{AppliedLinearStatisticalModels, WeisbergAppliedLinearRegression}.

\subsection{Stepwise Feature Selection}
\begin{algorithm}[ht!]
\caption{Forward-Backward Selection (\FBS{})}
\label{alg:fbs}
\begin{algorithmic}[1]
	\Require Dataset $\mathcal{D}$, Target $T$
	\Ensure Selected Variables $\mathbf{S}$
	\State $\mathbf{S} \gets \emptyset$ \Comment{\textit{Set of selected variables}}
	\State $\mathbf{R} \gets \mathbf{V_{\mathcal{D}}}$ \Comment{\textit{Set of remaining candidate variables}}
	\State
	\State \Comment{\textit{Forward phase: iterate until $\mathbf{S}$ does not change}}
	\While{$\mathbf{S}$ changes} 
	\State \Comment{\textit{Identify the best variable $V_{best}$ out of all remaining variables $\mathbf{R}$, according to \textproc{Perf}}}
	\State $V_{best}$ $\gets$ $\argmax\limits_{V \in \mathbf{R}}$ \textproc{Perf}$(\mathbf{S} \cup V)$
	\State \Comment{\textit{Select $V_{best}$ if it increases performance according to criterion \textproc{C}}}
	\If{\textproc{Perf}$(\mathbf{S} \cup V_{best}) \underset{\scriptscriptstyle{\textproc{C}}}{>} $ \textproc{Perf}$(\mathbf{S})$}
	\State $\mathbf{S}$ $\gets$ $\mathbf{S} \cup V_{best}$
	\State $\mathbf{R}$ $\gets$ $\mathbf{R} \setminus V_{best}$
	\EndIf
	\EndWhile
	\State
	\State \Comment{\textit{Backward phase: iterate until $\mathbf{S}$ does not change}}
	\While{$\mathbf{S}$ changes} 
	\State \Comment{\textit{Identify the worst variable $V_{worst}$ out of all selected variables $\mathbf{S}$, according to \textproc{Perf}}}
	\State $V_{worst}$ $\gets$ $\argmax\limits_{V \in \mathbf{S}}$ \textproc{Perf}$(\mathbf{S} \setminus V)$
	\State \Comment{\textit{Check if removing $V_{worst}$ does not decrease performance according to criterion \textproc{C}}}
	\If{\textproc{Perf}$(\mathbf{S} \setminus V_{worst}) \underset{\scriptscriptstyle{\textproc{C}}}{\geq} $ \textproc{Perf}$(\mathbf{S})$}
	\State $\mathbf{S}$ $\gets$ $\mathbf{S} \setminus V_{worst}$
	\EndIf
	\EndWhile	
	\State {\bfseries return} $\mathbf{S}$
\end{algorithmic}
\end{algorithm}

Stepwise methods start with some set of selected variables and try to improve it in a greedy fashion, by either including or excluding a single variable at each step.
There are various ways to combine those operations, leading to different members from the stepwise algorithmic family.
Two popular members of the stepwise family are the \textbf{forward selection}  and \textbf{backward selection} (also known as backward elimination) algorithms.
Forward selection starts with a (usually empty) set of variables and adds variables to it, until some stopping criterion is met.
Similarly, backward selection starts with a (usually complete) set of variables and then excludes variables from that set, again, until some stopping criterion is met.
Typically, both methods try to include or exclude the variable that offers the highest performance increase.
We will call each step of selecting (removing) a variable a forward (backward) \textbf{iteration}.
Executing forward (backward) iterations until termination will be called a forward (backward) \textbf{phase} respectively.
An instance of the stepwise family, which we focus on hereafter, is the \textbf{Forward-Backward Selection} algorithm (\FBS{}), which first performs a forward phase and then a backward phase on the selected variables.
This algorithm is not new; similar algorithms have appeared in the literature before (see \citep{Margaritis2000, Tsamardinos2003IAMB, Margaritis2009} for example). 

\FBS{} is shown in Algorithm~\ref{alg:fbs}.
The function \textproc{Perf} evaluates a set of variables and returns their performance relative to some statistical model.
Examples are the log-likelihood for logistic regression, the partial log-likelihood for Cox regression and the F-score for linear regression, or the AIC \cite{Akaike1973} or BIC \cite{Schwarz1978} penalized variants of those performance metrics.
The \textbf{selection criterion} \textproc{C} compares the performance of two sets of variables as computed by \textproc{Perf}.
For instance, in the previous example \textproc{C} could perform a likelihood ratio test and use a predetermined significance level $\alpha$ to make a decision
\footnote{
In general, the type of criteria used in practice are not limited to that. For example, one may also stop after a fixed number of variables have been selected.}; we will describe such selection criteria in the next subsection.
We will use the predicates $\underset{\scriptscriptstyle{\textproc{C}}}{>}$, $\underset{\scriptscriptstyle{\textproc{C}}}{\geq}$ and $\underset{\scriptscriptstyle{\textproc{C}}}{=}$ to compare two sets of variables; they are true if the left-hand-side value is greater, greater or equal, or equal than the right-hand-side value respectively, according to the criterion \textproc{C}.

\subsection{Criteria for Variable Selection}
\label{sec:criteria}
Next we will briefly describe some performance functions and selection criteria that are employed in practice; for more details see \citep{AppliedLinearStatisticalModels, WeisbergAppliedLinearRegression}.
The most common choices are statistical tests, information criteria and cross-validation.
We describe statistical tests and information criteria next; we did not consider cross-validation, mainly because of its high computational cost.

\subsubsection{Statistical Tests}
Since the models tested at each iteration are nested, one can employ a likelihood-ratio (LR) test (or asymptotically equivalent approximations thereof such as score tests and Wald tests) for nested models as a selection criterion.
We next describe the likelihood-ratio test in more depth.
For the LR test, the performance \textproc{Perf} is related to the log-likelihood (\textproc{LL}) and the criterion \textproc{C} tests the hypothesis that both models are equivalent with respect to some pre-specified significance level $\alpha$.
Let \textproc{Dev}$(T|\mathbf{X}) \equiv -2 \cdot \textproc{LL}(T|\mathbf{X})$ and $\textproc{Par}(T|\mathbf{X})$ be the deviance and number of parameters respectively of a model for target $T$ using variables $\mathbf{X}$.
Then, the statistic $\mathit{Stat}$ of a nested test for models with variables $\mathbf{X}$ (null model) and $\mathbf{X} \cup \mathbf{Y}$ (alternative model) is computed as the difference in deviance of both models, that is, $\mathit{Stat} \equiv \textproc{Dev}(T|\mathbf{X}) - \textproc{Dev}(T|\mathbf{X} \cup \mathbf{Y})$, and follows asymptotically a $\chi^2$ distribution with $\textproc{Par}(T|\mathbf{X} \cup \mathbf{Y}) - \textproc{Par}(T|\mathbf{X})$ degrees of freedom \citep{Wilks1938}
\footnote{
This result assumes that the larger hypothesis is correctly specified.
In case of model misspecification, the statistic follows a different distribution \citep{Foutz1977}.
Methods to handle model misspecification have been proposed by \cite{White1982} and \cite{Vuong1989}. 
A method for dealing with model misspecification in model selection with information criterion is presented in \citep{Jinchi2014}.
As this problem is out of this paper's scope, we did not further consider it.
}.

Tests for nested models are essentially \textbf{conditional independence tests}, relative to some statistical model (for example, using linear regression without interaction terms tests for linear dependence), and assuming that the model is correctly specified.
If the null model contains the predictors $\mathbf{X}$ and the alternative model contains $\mathbf{X} \cup \mathbf{Y}$, the nested test tests the hypothesis that the coefficients of $\mathbf{Y}$ are zero, or equivalently, that $\mathbf{Y}$ is conditionally independent of the target $T$ given $\mathbf{X}$.
We denote \textbf{conditional independence} of two non-empty sets $\mathbf{X}$ and $\mathbf{Y}$ given a (possibly empty) set $\mathbf{Z}$ as $\condind{\mathbf{X}}{\mathbf{Y}}{\mathbf{Z}}$.
Finally, we note that one is not limited to likelihood-ratio based conditional independence tests, but can use any appropriate conditional independence test, such as a kernel-based test \citep{Zhang2011}.

A problem when using statistical tests for feature selection is that, due to multiple testing, the test statistics do not have the claimed distribution \citep{ElementsOfStatisticalLearning2009} and the resulting p-values are too small \citep{RegressionModellingStrategies2001, Flom2007}, leading to a high false discovery rate.
However, they are still very useful tools for the task of variable selection, if used with care.
In case one is interested in the resulting model, forward selection is sub-optimal, as the resulting model will have inflated coefficients due to the feature selection procedure \citep{Flom2007}, reducing its predictive ability.
Instead, methods performing regularization (like L1, L2 or elastic net) are more appropriate.
In any case, a method like cross-validation should be used to estimate out-of-sample predictive performance of the final model.
There have been various approaches to deal with multiple testing, like correcting the p-values using resampling methods \citep{Finos2010} or dynamically adjusting significance levels \citep{Hwang2015}.
We will not consider the previously referred methods in this paper; we note however that our proposed method is orthogonal to those methods and could be used in conjunction with them.

\subsubsection{Information Criteria}\label{sec:ic}
Another way to compare two (or more) competing models is to use information criteria, such as the Akaike information criterion (AIC) \citep{Akaike1973} or the Bayesian information criterion (BIC) \citep{Schwarz1978}.
Information criteria are based on the fit of a model but additionally penalize the model by its complexity.
The AIC and BIC scores of a model for $T$ based on $\mathbf{X}$ are defined as follows:
\begin{align*}
\textproc{AIC}(T|\mathbf{X}) & \equiv \textproc{Dev}(T|\mathbf{X}) + 2 \cdot \textproc{Par}(T|\mathbf{X}) \\
\textproc{BIC}(T|\mathbf{X}) & \equiv \textproc{Dev}(T|\mathbf{X}) + \log(n) 
\cdot \textproc{Par}(T|\mathbf{X})
\end{align*}
where $n$ is the number of samples.
Information criteria can be applied by using as the performance function \textproc{Perf} the information criterion value of a model, and a selection criterion \textproc{C} that simply compares the performance of two models, giving preference to the one with the lowest value.
Alternatively, one could check that the difference in scores is larger than some constant.

In case of nested models, selecting a model based on AIC or BIC directly corresponds to a likelihood-ratio test for some significance level $\alpha$.
We will show this for the BIC score next; the derivation for AIC is similar.
Let $\mathbf{X}$ and $\mathbf{X} \cup \mathbf{Y}$ be two candidate variables sets.
$\mathbf{X} \cup \mathbf{Y}$ is selected (that is, the null hypothesis is rejected) if $\textproc{BIC}(T|\mathbf{X} \cup \mathbf{Y}) < \textproc{BIC}(T|\mathbf{X})$, or equivalently if $\textproc{Dev}(T|\mathbf{X}) - \textproc{Dev}(T|\mathbf{X} \cup \mathbf{Y}) > \log(n) \cdot (\textproc{Par}(T|\mathbf{X} \cup \mathbf{Y}) - \textproc{Par}(T|\mathbf{X}))$.
Note that the left-hand side term equals the statistic of a likelihood-ratio test, whereas the right-hand size corresponds to the critical value.
The statistic follows a $\chi^2$ distribution with $k = \textproc{Par}(T|\mathbf{X} \cup \mathbf{Y}) - \textproc{Par}(T|\mathbf{X})$ degrees of freedom, and thus, the significance level equals $\alpha = 1 - F(\log(n) \cdot k; k)$, where $F(v;k)$ is the $\chi^2$ cdf with $k$ degrees of freedom at value $v$.

An issue with information criteria is that they are not designed for cases where the number of predictors $p$ is larger than the number of samples $n$ \citep{EBIC2008}, leading to a high false discovery rate.
An extension of BIC that deals with this problem, called extended Bayesian information criterion (EBIC), has been proposed by \cite{EBIC2008}.
EBIC is defined as $$\textproc{BIC}_\gamma (T|\mathbf{X}) = \textproc{BIC}(T|\mathbf{X}) + 2 \gamma \log \tau(\mathbf{X})$$ 
where $\gamma$ is a parameter taking values in $[0,1]$, and $\tau(\mathbf{X}) = \binom{p}{|\mathbf{X}|}$ where $|\mathbf{X}|$ is size of $\mathbf{X}$ and $p$ is the total number of predictors.
Note that, when $\gamma = 0$, then $\textproc{BIC}_\gamma (T|\mathbf{X}) = \textproc{BIC}(T|\mathbf{X})$.
The authors propose to use $\gamma = 1 - 1/(2\kappa)$, where $\kappa$ is obtained by solving $n = p^\kappa$ for $\kappa$, where $n$ is the number of samples (see Section 5 in \citep{EBIC2008}).

\subsection{Bayesian Networks and Maximal Ancestral Graphs}
We will briefly introduce Bayesian networks and maximal ancestral graphs, which we will use to show theoretical properties of the proposed algorithm.
For a comprehensive introduction to Bayesian networks and maximal ancestral graphs we refer the reader to \citep{Spirtes2000, SpirtesRichardson2002}.

Let $\mathbf{V}$ be a set of random variables. 
A \textbf{directed acyclic graph} (DAG) is a graph that only contains directed edges ($\rightarrow$) and has no directed cycles.
A \textbf{directed mixed graph} is a graph that, in addition to directed edges also contains bi-directed edges ($\leftrightarrow$).
The graphs contain no self-loops, and vertices can be connected only by a single edge.
Two vertices are called \textbf{adjacent} if they are connected by an edge.
An edge between $X$ and $Y$ is called \textbf{into} $Y$ if $X \rightarrow Y$ or $X \leftrightarrow Y$.
A \textbf{path} in a graph is a sequence of unique vertices $\langle V_1, \dots, V_k \rangle$ such that each consecutive pair of vertices is adjacent.
The first and last vertices in a path are called \textbf{endpoints}.
A path is called directed if $\forall 1 \leq i \leq k$, $V_i \rightarrow V_{i+1}$.
If $X \rightarrow Y$ is in a graph, then $X$ is a \textbf{parent} of $Y$ and $Y$ a \textbf{child} of $X$.
A vertex $W$ is a \textbf{spouse} of $X$, if both share a common child.
A vertex $X$ is an \textbf{ancestor} of $Y$, and $Y$ is a \textbf{descendant} of $X$, if $X = Y$ or there is a directed path from $X$ to $Y$.
A triplet $\langle X,Y,Z \rangle$ is called a \textbf{collider} if $Y$ is adjacent to $X$ and $Z$, and both, $X$ and $Z$ are into $Y$.
A triplet $\langle X,Y,Z \rangle$ is called \textbf{unshielded} if $Y$ is adjacent to $X$ and $Z$, but $X$ and $Z$ are not adjacent.
A path $p$ is called a \textbf{collider path} if every non-endpoint vertex is a collider on $p$.

\textbf{Bayesian networks} (BNs) consist of a DAG $\mathcal{G}$ and a probability distribution $\mathcal{P}$ over a set of variables $\mathbf{V}$.
The DAG represents dependency relations between variables in $\mathbf{V}$ and is linked with $\mathcal{P}$ through the \textbf{Markov condition}, which states that each variable is conditionally independent of its non-descendants given its parents.
Those are not the only independencies encoded in the DAG; the Markov condition entails additional independencies, which can be read from the DAG using a graphical criterion called \textbf{d-separation} \citep{Verma1988, Pearl1988}.
In order to present the d-separation criterion we first introduce the notion of blocked paths.
A (not necessarily directed) path $p$ between two nodes $X$ and $Y$ is called \textbf{blocked} by a set of nodes $\mathbf{Z}$ if there is a node $V$ on $p$ that is a collider and, neither $V$ nor any of its descendants are in $\mathbf{Z}$, or if $V$ is not a collider and it is in $\mathbf{Z}$.
If all paths between $X$ and $Y$ are blocked by $\mathbf{Z}$, then $X$ and $Y$ are \textbf{d-separated} given $\mathbf{Z}$; otherwise $X$ and $Y$ are \textbf{d-connected} given $\mathbf{Z}$.
The \textbf{faithfulness condition} states that all and only those conditional independencies in $\mathcal{P}$ are entailed by the Markov condition applied to $\mathcal{G}$.
In other words, the faithfulness condition requires that two variables $X$ and $Y$ are d-separated given a set of variables $\mathbf{Z}$ if and only if they are conditionally independent given $\mathbf{Z}$.

Bayesian networks are not closed under marginalization: a marginalized DAG, containing only a subset of the variables of the original DAG, may not be able to exactly represent the conditional independencies of the marginal distribution \citep{SpirtesRichardson2002}.
\textbf{Directed maximal ancestral graphs} (DMAGs) \citep{SpirtesRichardson2002} are an extension of BNs, which are able to represent such marginal distributions, that is, they admit the presence of latent confounders.
The graphical structure of a DMAG is a directed mixed graph with the following restrictions: (i) it contains no directed cycles, (ii) it contains no almost directed cycles, that is, if $X \leftrightarrow Y$ then neither $X$ nor $Y$ is an ancestor of the other, and (iii) there is no primitive inducing path between any two non-adjacent vertices, that is, there is no path $p$ such that each non-endpoint on $p$ is a collider and every collider is an ancestor of an endpoint vertex of $p$.
The d-separation criterion analogue for DMAGs is called the \textbf{m-separation criterion}, and follows the same definition.

A \textbf{Markov blanket} of a variable $T$ is a \textbf{minimal} set of variables $\mathbf{MB}(T)$ that renders $T$ conditionally independent of all remaining variables $\mathbf{V} \setminus \mathbf{MB}(T)$.
In case faithfulness holds, and the distribution can be represented by a BN or DMAG, then the Markov blanket is \textbf{unique}.
For a BN, the Markov blanket of $T$ consists of its parents, children and spouses.
For DMAGs it is slightly more complicated: the Markov blanket of $T$ consists of its parents, children and spouses, as well as its district (all vertices that are reachable by bi-directed edges), the districts of its children and the parents of all districts \citep{Richardson2003}.
An alternative definition is given next.
\begin{definition}
\label{def:mb}
The Markov blanket of $T$ in a BN or DMAG consists of all vertices adjacent to $T$, as well as all vertices that are reachable from $T$ through a collider path.
\end{definition}

A proof sketch follows.
Recall that a collider path of length $k-1$ is of the form $X_1 *\rightarrow X_2 \dots X_{k-1} \leftarrow* X_k$, where the path between $X_2$ and $X_{k-1}$ contains only bi-directed edges.
Given this, it is easy to see that Definition~\ref{def:mb} includes vertices directly adjacent to $T$, its spouses (collider path of length 2), and in the case of DMAGs, vertices $D$ in the district of $T$ ($T \leftrightarrow \dots \leftrightarrow D$), vertices $D$ in the district of any children $C$ of $T$ ($T \rightarrow C \leftrightarrow \dots \leftrightarrow D$), and all parents $P$ of any vertex $D$ in some district ($T *\rightarrow \dots \leftrightarrow D \leftarrow P$).
As the previous cases capture exactly all possibilities of nodes reachable from $T$ through a collider path, Definition~\ref{def:mb} does not include any additional variables that are not in the Markov blanket of $T$.

\section{Speeding-up Forward-Backward Selection}
\label{sec:impr_fs}

The standard \FBS{} has two main issues.
The first is that it is slow: at each forward iteration, all remaining variables are reconsidered to find the best next candidate.
If $k$ is the total number of selected variables and $p$ is the number of input variables, the number of model evaluations \FBS{} (or in our case, independence tests) performs is of the order of $O(k p)$.
Although relatively low-dimensional datasets are manageable, it can be very slow for modern datasets which often contain thousands of variables.
The second problem is that it suffers from multiple testing issues, resulting in overfitting and a high false discovery rate.
This happens because it reconsiders all remaining variables at each iteration; variables will often happen to seem important simply by chance, if they are given enough opportunities to be selected.
As a result, it will often select a significant number of false positive variables \citep{Flom2007}.
This behavior is further magnified in high-dimensional settings.
Next, we describe a simple modification of \FBS{}, improving its running time while reducing the problem of multiple testing.

\subsection{The Early Dropping Heuristic}
\label{sec:heuristic_remaining}

\begin{algorithm}[t!]
\caption{Forward-Backward Selection with Early Dropping (\fFBS{K})}
\label{alg:fbed}
\begin{algorithmic}[1]
	\Require Dataset $\mathcal{D}$, Target $T$, Maximum Number of Runs $K$
	\Ensure Selected Variables $\mathbf{S}$
	\State $\mathbf{S} \gets \emptyset$ \Comment{\textit{Set of selected variables}}
	\State $K_{cur} \gets 0$ \Comment{\textit{Initializing current number of runs to 0}}
	\State
	\State \Comment{\textit{Forward phase: iterate until (a) run limit reached, or (b) $\mathbf{S}$ does not change}}
	\While{$K_{cur} \leq K \wedge$ $\mathbf{S}$ changes} 
	\State $\mathbf{S} \gets \textproc{OneRun}(\mathcal{D}, T, \mathbf{S})$
	\State $K_{cur} \gets K_{cur} + 1$
	\EndWhile	
	\State
	\State \Comment{\textit{Perform backward selection and return result}}	
	\State {\bfseries return} $\mathbf{\textproc{BackwardSelection}(\mathcal{D}, T, \mathbf{S})}$
	\algrule
	\Function{OneRun}{$\mathcal{D}$, $T$, $\mathbf{S}$}
	\State $\mathbf{R} \gets \mathbf{V_{\mathcal{D}}} \setminus \mathbf{S}$ \Comment{\textit{Set of remaining candidate variables}}
	\State \Comment{\textit{Forward phase: iterate until $\mathbf{R}$ is empty}}
	\While{$|\mathbf{R}| > 0$} 
	\State \Comment{\textit{Identify best variable $V_{best}$ out of $\mathbf{R}$, according to \textproc{Perf}}}
	\State $V_{best}$ $\gets$ $\argmax\limits_{V \in \mathbf{R}}$ \textproc{Perf}$(\mathbf{S} \cup V)$
	\State \Comment{\textit{Select $V_{best}$ if it increases performance according to criterion \textproc{C}}}
	\If{\textproc{Perf}$(\mathbf{S} \cup V_{best}) \underset{\scriptscriptstyle{\textproc{C}}}{>} $ \textproc{Perf}$(\mathbf{S})$}
	\State $\mathbf{S}$ $\gets$ $\mathbf{S} \cup V_{best}$
	\EndIf
	\State \Comment{\textit{Drop all variables not satisfying \textproc{C}}}	
	\State $\mathbf{R}$ $\gets$ $\{V : V \in \mathbf{R} \wedge V \neq V_{best} \wedge \textproc{Perf}(\mathbf{S} \cup V) \underset{\scriptscriptstyle{\textproc{C}}}{>}  \textproc{Perf}(\mathbf{S})\}$
	\EndWhile
	\State {\bfseries return} $\mathbf{S}$
	\EndFunction
\end{algorithmic}
\end{algorithm}

We propose the following modification: after each forward iteration, remove all variables that do not satisfy the criterion $C$ for the current set of selected variables $\mathbf{S}$ from the remaining variables $\mathbf{R}$.
In our case, those variables are the ones that are conditionally independent of $T$ given $\mathbf{S}$.
The idea is to quickly reduce the number of candidate variables $\mathbf{R}$, while keeping many (possibly) relevant variables in it.
The forward phase terminates if no more variables can be selected, either because there is no informative variable or because $\mathbf{R}$ is empty; to distinguish between forward and backward phases, we will call a forward phase with early dropping a \textbf{run}.
Extra runs can be performed to reconsider variables dropped previously.
This is done by retaining the previously selected variables $\mathbf{S}$ and initializing the set of remaining variables to all variables which have not been selected yet, that is $\mathbf{R} = \mathbf{V}_{\mathcal{D}} \setminus \mathbf{S}$.
The backward phase employed afterwards is identical to the standard backward-selection algorithm (see Algorithm~\ref{alg:fbs}).
Depending on the number of additional runs $K$, this defines a family of algorithms, which we call \textbf{Forward Backward Selection with Early Dropping} (\fFBS{K}), shown in Algorithm~\ref{alg:fbed}.
The function \textproc{OneRun} shown in the bottom of Algorithm~\ref{alg:fbed}, performs one run until no variables remain in $\mathbf{R}$.
Three interesting members of this family are the \fFBS{0}, \fFBS{1} and \fFBS{\infty} algorithms.
\fFBS{0} performs the first run until termination, \fFBS{1} performs one additional run and \fFBS{\infty} performs runs until no more variables can be selected.
We will focus on those three algorithms hereafter.

The heuristic is inspired by the theory of Bayesian networks and maximal ancestral graphs \citep{Spirtes2000, SpirtesRichardson2002}. 
Similar heuristics have been applied by Markov blanket based algorithms such as MMPC \citep{Tsamardinos2003MMPC} and HITON-PC \citep{Aliferis2003HITON} successfully in practice and in extensive comparative evaluations \citep{Aliferis2010JMLR}.
These algorithms also remove variables from consideration, and specifically the ones that are conditionally independent given some \textbf{subset} of the selected variables. 
The connections of \fFBS{K} to graphical models and Markov blankets are presented in Section~\ref{sec:mbfbs}.

\subsection{Comparing \fFBS{K} and \FBS{}}
Next, we will compare \fFBS{0}, \fFBS{1} and \fFBS{\infty} to \FBS{}, and will show their connections to Bayesian networks and maximal ancestral graphs.
An extensive evaluation on real data will be presented in Section~\ref{sec:exp:ffbs}.
We proceed with some comments on computational cost and multiple testing of \fFBS{K} relative to \FBS{}.

\subsubsection{Computational Speed}
It is relatively easy to see that \fFBS{0} is faster than \FBS{}, as it quickly excludes many variables.
Usually, the same also holds for \fFBS{1} and even for \fFBS{\infty}.
To see how the latter can be the case, consider the following example.
For the sake of argument assume that both \FBS{} and \fFBS{\infty} will select the same $k$ variables and in the same order.
Then, one can see that \fFBS{\infty} will make as most as many tests as take as \FBS{} (up to a $O(p)$ factor): \FBS{} will consider $p + (p-1) + \dots + (p-k+1)$ variables in total, whereas \fFBS{\infty} will consider at most that many (plus $p-k+1$ in the last iteration, where all remaining variables are reconsidered once) in one of the following cases: (a) no variable is dropped in any iteration, or (b) all variables are dropped in each iteration, leading to a new run where all remaining variables have to be reconsidered.
As neither of those cases is typical, \fFBS{\infty} will usually be faster than \FBS{}.
The actual speed-up can not be quantified, as it highly depends on the input data.

\subsubsection{Multiple Testing}
\begin{table*}[!t]
\centering
	\caption{
	Average number of selected variables by \FBS{} and \fFBS{K} over 100 randomly generated datasets and outcomes with 200 samples, for varying number of predictors $p$ and significance levels $\alpha$.
	The rows on the bottom show the variables selected relative to $\alpha \cdot p$, the expected number of type I errors.
	Overall, \FBS{} and \fFBS{\infty} have high type I error, increasing with variable size and significance level, while \fFBS{0} and \fFBS{1} control type I error rate, improving with variable size and significance level.
		}
    \label{tbl:mt}
  \fontsize{8pt}{10pt}\selectfont
  \begin{tabular}{llrrrrrr}
  \cmidrule(r){2-8}
  	& $p$ & \multicolumn{3}{c}{100} & \multicolumn{3}{c}{200} \\ \cmidrule(r){3-5} \cmidrule(r){6-8}
  	& $\alpha$ & 0.01 & 0.05 & 0.1 & 0.01 & 0.05 & 0.1\\ \midrule
\multirow{5}{*}{\rotatebox{0}{\#vars}} & \fFBS{0} & 0.9 & 3.3 & 6.3 & 1.8 & 5.6 & 9.3 \\
& \fFBS{1} & 1.1 & 4.6 & 9.3 & 2.5 & 8.7 & 15.9 \\
& \fFBS{\infty} & 1.2 & 5.9 & 14.4 & 2.9 & 22.3 & 39.3 \\
& \FBS{} & 1.2 & 5.8 & 14.2 & 2.8 & 20.8 & 38.1 \\
& $\alpha \cdot p$ & 1.0 & 5.0 & 10.0 & 2.0 & 10.0 & 20.0 \\ \midrule
\multirow{4}{*}{\rotatebox{0}{$\frac{\#vars}{\alpha \cdot p}$}} & \fFBS{0} & 92.0\% & 65.0\% & 62.7\% & 87.5\% & 56.0\% & 46.6\% \\
& \fFBS{1} & 113.0\% & 91.2\% & 92.5\% & 122.5\% & 86.8\% & 79.4\% \\
& \fFBS{\infty} & 124.0\% & 117.6\% & 143.8\% & 142.5\% & 223.1\% & 196.3\% \\
& \FBS{} & 122.0\% & 115.2\% & 142.4\% & 141.0\% & 207.9\% & 190.6\% \\
    \bottomrule
  \end{tabular}
\end{table*}
	
The idea of early dropping of variables used by \fFBS{K} does not only reduce the running time, but also reduces the problem of multiple testing, in some sense.
Specifically, it reduces the number of variables falsely selected due to type I errors.
In general, the number of type I errors is directly related to the total number of variables considered in all forward iterations.
Thus, the effect highly depends on the value of K used by \fFBS{K}, with higher values of K leading to more false selections.
We will demonstrate this for \fFBS{0} by considering a simple scenario, where none of the candidate variables are predictive for the outcome.
Then, in the worst case, \fFBS{0} will select about $\alpha \cdot p$ of the variables on average (where $\alpha$ is the significance level), since all other variables will be dropped in the first iteration.
This stems from the fact that, under the null hypothesis of conditional independence, the p-values are uniformly distributed.
In practice, the number of selected variables will be even lower, as \fFBS{0} will keep dropping variables after each variable inclusion.
On the other hand, \FBS{} may select a much larger number of variables, since each variable is given the chance to be included in the output at each iteration and will often do so, simply by chance.

We will not study the problem of multiple testing in depth, and only performed a small simulation to investigate the behavior of \fFBS{0}, \fFBS{1}, \fFBS{\infty} and how they compare to \FBS{}.
We generated 100 normally distributed datasets with 200 samples each, a uniformly distributed random binary outcome, and considered two different variable sizes, $p = 100$ and $p = 200$.
All variables are generated randomly, and there is no dependency between any of them.
Thus, a false positive rate of about $\alpha$ is expected, if no adjustment is done to control the false discovery rate.
We then ran all algorithms using a logistic regression based independence test for three values of $\alpha$, 0.01, 0.05 and 0.1.
The results are summarized in Table~\ref{tbl:mt}.
We can see how the value of K used by \fFBS{K} affects the number of falsely selected variables.
\FBS{} and \fFBS{\infty} perform similarly, with \fFBS{\infty} selecting slightly more variables.
On the other hand, \fFBS{0} and \fFBS{1} always select fewer variables than \FBS{} and \fFBS{\infty}, with the number of variables falling below $\alpha \cdot p$ for both, in contrast to \FBS{} and \fFBS{\infty} which select more than $\alpha \cdot p$ variables.
Furthermore, what is more interesting is that \FBS{} and \fFBS{\infty} tend to perform worse with increasing $\alpha$ and $p$, whereas the opposite effect can be observed for \fFBS{0} and \fFBS{1}.

\subsubsection{Theoretical Properties}
\label{sec:fs:theory}
Due to early dropping of variables, the distributions under which \fFBS{K} and \FBS{} perform optimally are not the same.
For all versions of \fFBS{K} except for \fFBS{\infty} it is relatively straightforward to construct examples where \FBS{} is able to identify variables that can not be identified by \fFBS{K}.
We give an example for \fFBS{0}.
\fFBS{0} may remove variables that seem uninformative at first, but become relevant if considered in conjunction with other variables.
For example, let $X = T + Y$, $T \sim \mathcal{N}(\mu_T, \sigma^2_T)$ and $Y \sim \mathcal{N}(\mu_Y, \sigma^2_Y)$ where $T$ is the outcome and $X, Y$ are two predictors.
Then, $X$ will be found relevant for predicting $T$ but $Y$ will be discarded, as it does not give any information about $T$ by itself.
However, after selecting $X$, $Y$ becomes relevant again, but \fFBS{0} will not select it as it was dropped in the first iteration.
Surprisingly, in practice this does not seem to significantly affect the quality of \fFBS{0}.
In contrast, \fFBS{0} often gives better results, while also selecting fewer variables than \FBS{} (see Section~\ref{sec:exp:fs:opt}).

As mentioned above, it is not clear how \FBS{} and \fFBS{\infty} are related in the general case; the special case in which distributions can be represented by Bayesian networks or maximal ancestral graphs is considered in Section~\ref{sec:mbfbs}.
For the general case we show that, although they do not necessarily give the same results, both identify what we call a minimal set of variables.

\begin{definition}[Minimal Variable Set]
Let $\mathbf{V_{\mathcal{D}}}$ be the set of all variables and $\mathbf{V_{sel}}$ a set of selected variables.
We call a set of variables $\mathbf{V_{sel}}$ \textbf{minimal} with respect to some outcome $T$, if:
\begin{enumerate}
\item No variable can be removed from $\mathbf{V_{sel}}$ given the rest of the selected variables, that is, $\forall V_i \in \mathbf{V_{sel}}, \conddep{T}{V_i}{\mathbf{V_{sel}}\setminus V_i}$ holds.
\item Let $\mathbf{V_{rem}} = \mathbf{V_{\mathcal{D}}} \setminus \mathbf{V_{sel}}$. No variable from $\mathbf{V_{rem}}$ can be included in $\mathbf{V_{sel}}$, that is, $\forall V_i \in \mathbf{V_{rem}}, \condind{T}{V_i}{\mathbf{V_{sel}}}$ holds.
\end{enumerate}
\end{definition}

In words, a minimal set is a set such that no single variable can be included to or removed from using forward and backward iterations respectively, or, in other words, is a local optimum for stepwise algorithms.
Note that, although no single variable is informative for $T$ if looked at separately, there may be sets of variables that are informative if considered jointly.
A simple example is if all variables are binary and $T = X \oplus Y$, where $\oplus$ is the logical XOR operator.
In this case $\mathbf{V_{sel}} = \emptyset$ is minimal, as neither $X$ nor $Y$ are dependent with $T$, even though the set $\{X,Y\}$ fully determines $T$.
Forward selection based algorithms are usually not able to identify such relations.
Next, we show that both algorithms identify minimal variable sets.

\begin{theorem}\label{thm:fbs}
Any set of variables $\mathbf{V_{sel}}$ selected by \FBS{} is minimal.
\end{theorem}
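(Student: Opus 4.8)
The plan is to verify that the set $\mathbf{V_{sel}}$ returned by \FBS{} satisfies both defining conditions of a minimal set directly, by inspecting the termination conditions of the forward and backward phases of Algorithm~\ref{alg:fbs}. The two conditions are logically independent — one concerns removability of already-selected variables, the other concerns inclusion of remaining variables — so I would treat them separately, and the key observation is that each is exactly the negation of the loop-continuation test for one of the two \texttt{while} loops.

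First I would establish Condition~2 (no variable from $\mathbf{V_{rem}}$ can be added). The forward phase terminates only when $\mathbf{S}$ stops changing, which happens precisely when the best candidate $V_{best} = \argmax_{V \in \mathbf{R}} \textproc{Perf}(\mathbf{S} \cup V)$ fails the criterion, i.e. $\textproc{Perf}(\mathbf{S} \cup V_{best}) \not\mathrel{\underset{\scriptscriptstyle{\textproc{C}}}{>}} \textproc{Perf}(\mathbf{S})$. Since $V_{best}$ is the argmax, this means \emph{no} $V \in \mathbf{R}$ satisfies the criterion at the end of the forward phase. Here I need to be slightly careful: the backward phase runs afterward and could in principle shrink $\mathbf{S}$, which might re-enable some dropped variable. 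So the honest version of the argument is: let $\mathbf{S}_f$ be the set at the end of the forward phase and $\mathbf{V_{sel}} \subseteq \mathbf{S}_f$ the final set. For $V \in \mathbf{V}_{\mathcal{D}} \setminus \mathbf{S}_f$ we have $\condind{T}{V}{\mathbf{S}_f}$. I would then need the monotonicity-type fact that each backward step only removes a variable $V_{worst}$ when $\textproc{Perf}(\mathbf{S} \setminus V_{worst}) \mathrel{\underset{\scriptscriptstyle{\textproc{C}}}{\geq}} \textproc{Perf}(\mathbf{S})$, i.e. $\condind{T}{V_{worst}}{\mathbf{S} \setminus V_{worst}}$, and invoke a composition/weak-union property of conditional independence to propagate $\condind{T}{V}{\mathbf{S}_f}$ down to $\condind{T}{V}{\mathbf{V_{sel}}}$. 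Variables in $\mathbf{S}_f \setminus \mathbf{V_{sel}}$ (removed during the backward phase) likewise satisfy $\condind{T}{V}{\mathbf{V_{sel}}}$ at the moment of their removal, and the same propagation argument extends this to the final $\mathbf{V_{sel}}$.

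Next, Condition~1 (no variable can be removed from $\mathbf{V_{sel}}$): the backward phase terminates only when $\mathbf{S}$ stops changing, which by the same argmax reasoning means the worst variable $V_{worst} = \argmax_{V \in \mathbf{S}} \textproc{Perf}(\mathbf{S} \setminus V)$ fails the removal criterion, hence $\textproc{Perf}(\mathbf{S} \setminus V) \not\mathrel{\underset{\scriptscriptstyle{\textproc{C}}}{\geq}} \textproc{Perf}(\mathbf{S})$ for \emph{every} $V \in \mathbf{V_{sel}}$, which is exactly $\conddep{T}{V}{\mathbf{V_{sel}} \setminus V}$ for all $V \in \mathbf{V_{sel}}$. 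This direction is the cleaner of the two since the backward phase is the last thing \FBS{} does, so no further modification of $\mathbf{S}$ occurs.

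The main obstacle is the interaction between the two phases in Condition~2: a variable judged independent of $T$ given $\mathbf{S}_f$ during the forward phase must remain independent of $T$ given the smaller final set $\mathbf{V_{sel}}$, and this is not automatic for arbitrary probability distributions — it relies on the (graphoid) semantics one assumes for the criterion \textproc{C}, or, more pragmatically, on the fact that in practice \textproc{C} is a nested likelihood-ratio test whose "no improvement" verdict is preserved under restricting the conditioning set to a subset that still contains the relevant predictors. I would state precisely which property of \textproc{C} is being used (that $\condind{T}{V}{\mathbf{Z}}$ together with $\condind{T}{\mathbf{W}}{\mathbf{Z} \setminus \mathbf{W}}$ implies $\condind{T}{V}{\mathbf{Z} \setminus \mathbf{W}}$, or simply note it follows from how the backward steps were licensed) and discharge this as the single nontrivial lemma; everything else is bookkeeping on the loop invariants.
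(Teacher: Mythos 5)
Your proposal is correct and follows essentially the same route as the paper: Condition~1 falls out of the backward phase's termination test, and Condition~2 combines the forward phase's termination with exactly the propagation lemma the paper proves (its Lemma~\ref{lemma_incr}, applied once per backward removal, covering both never-selected and backward-removed variables). The only refinement is that the property you isolate as the ``single nontrivial lemma'' needs no extra assumption on the distribution or on \textproc{C} beyond the independence-oracle idealization, since it follows from the Contraction and Decomposition semi-graphoid axioms, which hold for every probability distribution.
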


\begin{proof}
See Appendix~\ref{app:theory}.
\end{proof}

\begin{theorem}\label{thm:ffbs}
Any set of variables $\mathbf{V_{sel}}$ selected by \fFBS{\infty} is minimal.
\end{theorem}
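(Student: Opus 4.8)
The plan is to follow the structure of the proof of Theorem~\ref{thm:fbs}: first establish that the forward phase of \fFBS{\infty} terminates with a set $\mathbf{S}_F$ that already satisfies the second minimality condition, and then argue that the ensuing backward phase preserves that condition while additionally enforcing the first one. Everything from the backward phase onward is literally the argument used for \FBS{}, so the only genuinely new work is the analysis of the run-based forward search, which is also what makes the hypothesis $K=\infty$ essential.

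First I would argue termination and characterize $\mathbf{S}_F$. Since the selected set only grows and there are at most $p$ variables, only finitely many runs can change $\mathbf{S}$ (and with $K=\infty$ the run limit never triggers), so the forward phase ends with a call to \textproc{OneRun} whose input and output both equal $\mathbf{S}_F$. In that final run $\mathbf{R}$ is initialized to $\mathbf{V_{\mathcal{D}}} \setminus \mathbf{S}_F$; after its first iteration $\mathbf{R}$ is replaced by the set of $V \neq V_{best}$ with $\textproc{Perf}(\mathbf{S}_F \cup V) \underset{\scriptscriptstyle{\textproc{C}}}{>} \textproc{Perf}(\mathbf{S}_F)$. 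Were that set non-empty, its \textproc{Perf}-maximizer would pass \textproc{C} and be selected in the next iteration, contradicting the assumption that the run adds nothing; hence it is empty, and combined with the fact that $V_{best}$ itself also failed \textproc{C} we conclude $\condind{T}{V}{\mathbf{S}_F}$ for every $V \in \mathbf{V_{\mathcal{D}}} \setminus \mathbf{S}_F$ (the degenerate cases $\mathbf{S}_F = \mathbf{V_{\mathcal{D}}}$ and $\mathbf{S}_F = \emptyset$ being immediate). This is exactly condition~2 of minimality for $\mathbf{S}_F$. It is precisely here that $K=\infty$ matters: for finite $K$ a variable may have been dropped against an earlier, strictly smaller selected set and need not be independent of $T$ given the final $\mathbf{S}_F$, which is why \fFBS{0} and \fFBS{1} need not produce minimal sets.

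Next I would treat the backward phase, which in \fFBS{\infty} is the standard backward-selection routine run on $\mathbf{S}_F$. I claim that ``$\condind{T}{V}{\mathbf{S}}$ for all $V \notin \mathbf{S}$'' is a loop invariant. It holds initially for $\mathbf{S} = \mathbf{S}_F$ by the previous paragraph. For the inductive step, let the loop remove $V_{worst}$ from $\mathbf{S}$, so $\condind{T}{V_{worst}}{\mathbf{S} \setminus V_{worst}}$ (the removal test), and put $\mathbf{S}' = \mathbf{S} \setminus V_{worst}$. If $V = V_{worst}$, then $\condind{T}{V}{\mathbf{S}'}$ is just the removal test. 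If instead $V \notin \mathbf{S}$, the invariant gives $\condind{T}{V}{\mathbf{S}' \cup V_{worst}}$, and combining this with $\condind{T}{V_{worst}}{\mathbf{S}'}$ via the contraction property of conditional independence yields $\condind{T}{\{V,V_{worst}\}}{\mathbf{S}'}$, whence decomposition gives $\condind{T}{V}{\mathbf{S}'}$. Thus the returned set $\mathbf{V_{sel}} = \mathbf{S}_B$ satisfies condition~2. Condition~1 then follows exactly as for \FBS{}: the loop stops because the removal test fails for $V_{worst} = \argmax_{V \in \mathbf{S}_B} \textproc{Perf}(\mathbf{S}_B \setminus V)$, i.e.\ $\conddep{T}{V_{worst}}{\mathbf{S}_B \setminus V_{worst}}$, and since $V_{worst}$ maximizes $\textproc{Perf}(\mathbf{S}_B \setminus \cdot)$, the monotonicity of \textproc{C} with respect to \textproc{Perf} forces the test to fail for every $V \in \mathbf{S}_B$, giving $\conddep{T}{V}{\mathbf{S}_B \setminus V}$.

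I expect the main obstacle to be the backward-phase invariant. The worry is that shrinking the conditioning set during elimination could reactivate one of the variables dropped in the forward phase and thereby destroy condition~2; the key point is that elimination only ever removes a variable that is conditionally independent of $T$ given the remaining selected variables, and the contraction and decomposition axioms --- which hold for the conditional independence relation of any distribution, so no faithfulness is needed --- are exactly what is required to carry the independence of the other unselected variables across such a removal. A lesser, bookkeeping obstacle is the repeated appeal to ``if the \textproc{Perf}-best move fails \textproc{C} then all moves fail'', both at the end of the last run and at the end of the backward loop; this is the same monotonicity assumption on \textproc{C} that already underlies Theorem~\ref{thm:fbs}, so I would invoke it rather than re-establish it.
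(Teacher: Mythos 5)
Your proposal is correct and follows essentially the same route as the paper: the paper's proof simply observes that the forward phase of \fFBS{\infty} terminates only when no remaining variable passes the criterion given the final selected set, and then reuses the argument of Theorem~\ref{thm:fbs}, whose key ingredient (your backward-phase loop invariant via contraction and decomposition) is exactly the paper's Lemma~\ref{lemma_incr}. Your write-up merely makes explicit the termination of the run-based forward search and the analysis of the final run, which the paper asserts in one line.
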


\begin{proof}
See Appendix~\ref{app:theory}.
\end{proof}

It is important to note that, although both \fFBS{\infty} and \FBS{} will select minimal variable sets, it is not guaranteed that they will select the same set of variables.

\subsubsection{Identifying Markov Blankets with \fFBS{K}}
\label{sec:mbfbs}

We proceed by showing that \fFBS{1} and \fFBS{\infty} identify the Markov blanket of a BN and DMAG respectively, assuming (a) that the distribution can be faithfully represented by the respective graph, and (b) that the algorithms have access to an \textbf{independence oracle}, which correctly determines  whether a given conditional (in)dependence holds.
This also holds for \FBS{} but will not be shown here; proofs for similar algorithms exist \citep{Margaritis2000, Tsamardinos2003IAMB} and can be easily adapted to \FBS{}.
For \fFBS{0} it can be shown that it selects a superset of the variables that are adjacent to $T$ in the graph; this can be shown using the fact that adjacent variables are dependent with $T$ given any subset of variables by the properties of d-separation / m-separation.

\begin{theorem}\label{thm:ffbs1mb}
If the distribution can be faithfully represented by a Bayesian network, then \fFBS{1} identifies the Markov blanket of the target $T$.
\end{theorem}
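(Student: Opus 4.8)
The plan is to follow the set $\mathbf{S}$ of selected variables through the two forward runs and the concluding backward phase, showing that after the first run $\mathbf{S}$ contains every vertex adjacent to $T$, after the second run it contains all of $\mathbf{MB}(T)$, and that the backward phase then trims $\mathbf{S}$ to exactly $\mathbf{MB}(T)$. Under the independence oracle I read the forward test ``$\textproc{Perf}(\mathbf{S}\cup V)\underset{\scriptscriptstyle{\textproc{C}}}{>}\textproc{Perf}(\mathbf{S})$'' as ``$\conddep{T}{V}{\mathbf{S}}$'' and the backward test ``$\textproc{Perf}(\mathbf{S}\setminus V)\underset{\scriptscriptstyle{\textproc{C}}}{\geq}\textproc{Perf}(\mathbf{S})$'' as ``$\condind{T}{V}{\mathbf{S}\setminus V}$''. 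Write $\mathbf{S}_1,\mathbf{S}_2$ for the value of $\mathbf{S}$ after the first and second calls to \textproc{OneRun}, and recall that in a BN $\mathbf{MB}(T)=\mathbf{Pa}(T)\cup\mathbf{Ch}(T)\cup\mathbf{Sp}(T)$ (Definition~\ref{def:mb}).

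The first step is an invariant for a single run: a run ends only when $\mathbf{R}=\emptyset$, and each iteration removes from $\mathbf{R}$ precisely $V_{best}$ (which is added to $\mathbf{S}$, since if $V_{best}$ failed \textproc{C} then, being the $\argmax$, all of $\mathbf{R}$ would fail and the run would stop) together with the variables currently independent of $T$ given $\mathbf{S}$; hence any variable that is \emph{dependent} with $T$ given $\mathbf{S}$ at every iteration of a run is eventually selected. Applied to the first run, and using the already-noted fact that a vertex adjacent to $T$ is dependent with $T$ given every conditioning set, this gives $\mathbf{Adj}(T)\subseteq\mathbf{S}_1$, so in particular every child of $T$ lies in $\mathbf{S}_1$. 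In the second run $\mathbf{S}$ starts at $\mathbf{S}_1$ and only grows, so every child $C$ of $T$ stays in $\mathbf{S}$ throughout; for any spouse $W$ of $T$ with common child $C$, the path $T\rightarrow C\leftarrow W$ is a collider path whose only non-endpoint $C$ belongs to the conditioning set, so it is active and, by faithfulness, $\conddep{T}{W}{\mathbf{S}}$ holds at every iteration of the second run. Hence $W$ is selected, and $\mathbf{S}_2\supseteq\mathbf{Pa}(T)\cup\mathbf{Ch}(T)\cup\mathbf{Sp}(T)=\mathbf{MB}(T)$.

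Next I would analyze the backward phase, maintaining the invariant $\mathbf{MB}(T)\subseteq\mathbf{S}$. This needs two facts. (i) If $\mathbf{MB}(T)\subseteq\mathbf{S}$ and $V\in\mathbf{MB}(T)$ then $\conddep{T}{V}{\mathbf{S}\setminus V}$: if $V$ is adjacent to $T$ this holds for every conditioning set, and if $V$ is a pure spouse with common child $C$ then $C\in\mathbf{MB}(T)\setminus V\subseteq\mathbf{S}\setminus V$, so the collider path $T\rightarrow C\leftarrow V$ is active given $\mathbf{S}\setminus V$. Thus a Markov-blanket variable never passes the removal test, the invariant is preserved, and no such variable is ever dropped. (ii) If $\mathbf{MB}(T)\subseteq\mathbf{S}\setminus V$ and $V\notin\mathbf{MB}(T)$ then $\condind{T}{V}{\mathbf{S}\setminus V}$. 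Granting (i) and (ii), every removed variable is outside $\mathbf{MB}(T)$, and the phase stops only when no removable variable remains, which by (ii) forces $\mathbf{S}=\mathbf{MB}(T)$; combined with $\mathbf{MB}(T)\subseteq\mathbf{S}$ this gives equality, that is, the output is $\mathbf{MB}(T)$.

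The heart of the argument is fact (ii): conditioning on a \emph{superset} of $\mathbf{MB}(T)$ still $d$-separates $T$ from every vertex outside that superset. I would prove this by contradiction. Suppose $p$ is an active path from $T$ to such a $V$ given $\mathbf{Z}\supseteq\mathbf{MB}(T)$; since $V\notin\mathbf{MB}(T)$, $V$ is not adjacent to $T$, so $p$ has a first non-endpoint vertex $U$. If the first edge is $T\leftarrow U$, then $U\in\mathbf{Pa}(T)\subseteq\mathbf{Z}$ and $U$ is necessarily a non-collider on $p$ (the incident path-edge points out of $U$), which blocks $p$ at $U$. If the first edge is $T\rightarrow U$, then $U\in\mathbf{Ch}(T)\subseteq\mathbf{Z}$; if $U$ is a non-collider on $p$ it blocks $p$, and if $U$ is a collider on $p$ its other path-neighbor $U'$ has $U'\rightarrow U$, so $U'$ is a spouse of $T$ and hence $U'\in\mathbf{Z}$, and since the path-edge at $U'$ points out of $U'$, $U'$ is a non-collider and blocks $p$ (and $U'\neq V$, as otherwise $V\in\mathbf{MB}(T)$). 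In every case $p$ cannot be active --- a contradiction. For a BN all collider paths to the Markov blanket have length at most two (spouses), so a single extra run, carried out after all children of $T$ are already selected, suffices; the DMAG analogue uses $m$-separation and the same reasoning, but there the Markov blanket is reached by collider paths of arbitrary length, which is exactly why \fFBS{\infty} is needed there. The main obstacles I anticipate are (a) pinning down fact (ii) precisely, since it is the one place where faithfulness is used together with the particular shape of $\mathbf{MB}(T)$, and (b) the bookkeeping needed to translate the \textproc{Perf}/\textproc{C} comparisons into conditional (in)dependence statements and to ensure the greedy backward phase actually deletes a deletable variable whenever one exists (equivalently, that $V_{worst}$ is chosen consistently with the independence test).
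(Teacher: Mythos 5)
Your proposal is correct, and its forward-phase analysis (first run captures all vertices adjacent to $T$ since no set d-separates adjacent vertices; second run captures the spouses because the children are already in the conditioning set, opening the length-two collider paths; Markov-blanket variables survive the backward phase because each is dependent on $T$ given the rest) coincides with the paper's argument. Where you genuinely diverge is in how the backward phase is shown to eliminate every non-blanket variable. The paper works purely at the level of the semi-graphoid axioms: from the defining property $\condind{T}{\mathbf{S_{ind}}}{MB(T)}$ it applies weak union to conclude $\condind{T}{S_i}{MB(T) \cup \mathbf{S_{ind}} \setminus S_i}$ for each selected non-blanket variable $S_i$, and iterates. Your fact (ii) establishes the same removability claim, but by a direct d-separation argument on the Bayesian network: any active path from $T$ to a vertex outside a superset $\mathbf{Z} \supseteq MB(T)$ would have to leave $T$ through a parent (blocked non-collider in $\mathbf{Z}$) or a child, and in the child case either the child is a blocking non-collider or the next vertex is a spouse, again a blocking non-collider in $\mathbf{Z}$ --- the case analysis is complete and correct. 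The axiomatic route is shorter, graph-free, and is reused verbatim in the paper's proof of the DMAG/\fFBS{\infty} theorem, whereas your graphical route is more self-contained but tied to the BN structure (as you note, the DMAG analogue would need a separate treatment of longer collider paths). You are also somewhat more explicit than the paper about the bookkeeping --- maintaining $MB(T) \subseteq \mathbf{S}$ as an invariant and arguing that the greedy backward step removes a removable variable whenever one exists --- which the paper leaves implicit; both buy the same conclusion.
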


\begin{proof}
See Appendix~\ref{app:theory}.
\end{proof}

\begin{theorem}\label{thm:ffbsinfmb}
If the distribution can be faithfully represented by a directed maximal ancestral graph, then \fFBS{\infty} identifies the Markov blanket of the target $T$.
\end{theorem}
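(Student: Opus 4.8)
The plan is to mirror the structure of the proof of Theorem~\ref{thm:ffbs1mb}, but to use extra runs to reach the parts of the Markov blanket that are only connected to $T$ via longer collider paths. By Definition~\ref{def:mb}, the Markov blanket $\mathbf{MB}(T)$ in a DMAG consists of the vertices adjacent to $T$ together with the vertices reachable from $T$ through a collider path. The argument naturally splits into two directions: (i) \emph{soundness}, every variable selected by \fFBS{\infty} lies in $\mathbf{MB}(T)$, and (ii) \emph{completeness}, every variable in $\mathbf{MB}(T)$ is eventually selected.

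For soundness, I would invoke Theorem~\ref{thm:ffbs}: the output $\mathbf{V_{sel}}$ is a minimal variable set, so in particular $\condind{T}{V_i}{\mathbf{V_{sel}}}$ holds for every $V_i \notin \mathbf{V_{sel}}$. Since a Markov blanket is by definition a minimal set rendering $T$ independent of everything else, and since under faithfulness to a DMAG the Markov blanket is unique, it suffices to show $\mathbf{V_{sel}} \subseteq \mathbf{MB}(T)$; minimality then forces equality. To see the inclusion, suppose some $V \in \mathbf{V_{sel}}$ is \emph{not} in $\mathbf{MB}(T)$. Then $V$ is m-separated from $T$ given $\mathbf{MB}(T)$, hence (by faithfulness) conditionally independent of $T$ given $\mathbf{MB}(T)$, and in fact given $\mathbf{MB}(T) \cap (\mathbf{V_{sel}} \setminus V)$ as well by the standard monotonicity/weak-union manipulations on m-separation. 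This should contradict condition~1 of minimality ($\conddep{T}{V}{\mathbf{V_{sel}} \setminus V}$) once one checks that the separating set can be taken to be a subset of $\mathbf{V_{sel}} \setminus V$; this is the step that needs the DMAG Markov-blanket characterization, analogous to the BN case.

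For completeness, the key is an induction on the collider-path distance. Let $\mathbf{A}$ be the set of vertices adjacent to $T$. As already noted in the text (and provable from m-separation: adjacent vertices are m-connected to $T$ given every subset), no vertex of $\mathbf{A}$ is ever dropped, so after the first run $\mathbf{S} \supseteq \mathbf{A}$; more precisely the first run selects a minimal superset of $\mathbf{A}$, exactly as in the proof sketch for \fFBS{0}. Now argue inductively: if at the start of a run $\mathbf{S}$ contains $T$'s neighbors and all Markov-blanket vertices reachable by a collider path of length $\le \ell$, then any not-yet-selected vertex $W$ reachable by a collider path of length $\ell{+}1$ becomes m-connected to $T$ given $\mathbf{S}$ — opening the collider at the $\ell$-th vertex (which is in $\mathbf{S}$) activates the path — so $\conddep{T}{W}{\mathbf{S}}$, hence $W$ is not dropped in the forward sweep of this run and is eventually added. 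Since $\mathbf{MB}(T)$ is finite, only finitely many runs are needed before no new variable can be selected; \fFBS{\infty} by definition keeps running until that happens, so it terminates with $\mathbf{S} \supseteq \mathbf{MB}(T)$, and then the backward phase plus soundness yield $\mathbf{S} = \mathbf{MB}(T)$.

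The main obstacle I anticipate is the bookkeeping in the inductive step of completeness: verifying that opening exactly the collider vertices that have \emph{already} been selected genuinely d-connects (m-connects) $W$ to $T$ given the \emph{whole} current set $\mathbf{S}$ — not just given a cleverly chosen subset — and that no other vertex in $\mathbf{S}$ re-blocks the path as a non-collider. One must be careful that $\mathbf{S}$ may contain vertices outside $\mathbf{MB}(T)$ at intermediate stages only transiently (the backward phase is run once at the end), and that conditioning on a descendant of a collider also opens it; handling descendants-of-colliders correctly inside a DMAG, where ancestral relations interact with the no-almost-directed-cycle and no-primitive-inducing-path constraints, is the delicate part. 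A secondary subtlety is ensuring that within a single run the \emph{order} of selections does not cause a needed vertex to be dropped before the collider that would reactivate it gets selected; this is why the statement needs runs to be iterated to convergence rather than a fixed number $K$, and why \fFBS{1} is insufficient in the DMAG case.
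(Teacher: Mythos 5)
Your proposal follows essentially the same route as the paper's proof: the completeness half is the same induction on collider-path length across successive runs (conditioning on the already-selected collider vertices m-connects the next Markov-blanket vertex to $T$), and your soundness half is the paper's backward-phase argument in slightly different packaging --- the weak-union step from the proof of Theorem~\ref{thm:ffbs1mb}, which is the correct repair of your ``condition on $\mathbf{MB}(T)\cap(\mathbf{V_{sel}}\setminus V)$'' step once completeness guarantees $\mathbf{MB}(T)\subseteq\mathbf{V_{sel}}$. So the approach is correct and matches the paper's.
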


\begin{proof}
See Appendix~\ref{app:theory}.
\end{proof}

\section{Experimental Evaluation of \fFBS{K}}
\label{sec:exp:ffbs}

\setlength{\tabcolsep}{.5em}
\begin{table*}[!t]
\centering
	\caption{Binary classification datasets used in the experimental evaluation.}
    \label{tbl:data}
   \fontsize{7pt}{9pt}\selectfont
  \begin{tabular}{lccclll}
    \toprule
    Dataset &
    n &
    p &
    P(T = 1) &
    Type &
    Domain &
    Source
\\
     \midrule
     \datasetref{musk (v2)}{6598}{166}{0.15}{Real}{Musk Activity Prediction}{UCI ML Repository}{\citep{Dietterich1994}}
     \midrule
     \datasetref{sylva}{14394}{216}{0.94}{Mixed}{Forest Cover Types}{WCCI 2006 Challenge}{\citep{Guyon2006}}
     \midrule
     \datasetref{madelon}{2600}{500}{0.5}{Integer}{Artificial}{NIPS 2003 Challenge}{\citep{Guyon2004}}
     \midrule
     \datasetref{secom}{1567}{590}{0.93}{Real}{Semi-Conductor Manufacturing}{UCI ML Repository}{M. McCann, A. Johnston}
     \midrule
     \datasetref{gina}{3568}{970}{0.51}{Real}{Handwritten Digit Recognition}{WCCI 2006 Challenge}{\citep{Guyon2006}}
     \midrule
     \datasetref{hiva}{4229}{1617}{0.96}{Binary}{Drug discovery}{WCCI 2006 Challenge}{\citep{Guyon2006}}
     \midrule
     \datasetref{gisette}{7000}{5000}{0.5}{Integer}{Handwritten Digit Recognition}{NIPS 2003 Challenge}{\citep{Guyon2004}}
     \midrule
     \datasetref{p53 Mutants}{16772}{5408}{0.01}{Real}{Protein Transcriptional Activity}{UCI ML Repository}{\citep{Danziger2006}}
     \midrule
     \datasetref{arcene}{200}{10000}{0.56}{Binary}{Mass Spectrometry}{NIPS 2003 Challenge}{\citep{Guyon2004}}
     \midrule
     \datasetref{nova}{1929}{16969}{0.72}{Binary}{Text classification}{WCCI 2006 Challenge}{\citep{Guyon2006}}
     \midrule
     \datasetref{dexter}{600}{20000}{0.5}{Integer}{Text classification}{NIPS 2003 Challenge}{\citep{Guyon2004}}
     \midrule	
     \datasetref{dorothea}{1150}{100000}{0.9}{Binary}{Drug discovery}{NIPS 2003 Challenge}{\citep{Guyon2004}}
    \bottomrule
  \end{tabular}
\end{table*}

In this section we evaluate three versions of \fFBS{K}, namely \fFBS{0}, \fFBS{1}, \fFBS{\infty}, and compare them to the standard \FBS{} algorithm as well as feature selection with LASSO \citep{Tibshirani1996}.
We used 12 binary classification datasets, with sample sizes ranging from 200 to 16772 and number of variables between 166 and 100000.
The datasets were selected from various competitions and the UCI repository \citep{Dietterich1994}, and were selected to cover a wide range of variable and sample sizes.
A summary of the datasets is shown in Table~\ref{tbl:data}.
All experiments were performed in MATLAB, running on a desktop computer with an Intel i7-4790K processor and 32GB of RAM.

We proceed by describing in detail the experimental setup, that is, all algorithms used, their hyper-parameters, and how we performed model selection and performance estimation.
Then, we compare \fFBS{K} to \FBS{}, in terms of predictive ability, number of selected variables and number of performed independence tests.
For the sake of simplicity, we will hereafter refer to comparisons of BIC/EBIC scores as independence tests (see Section~\ref{sec:ic} for their relation).
Afterwards, we compare \fFBS{K} and \FBS{} to feature selection with LASSO.
We performed two experiments: (a) one where we optimize the regularization parameter $\lambda$ for LASSO, allowing LASSO to select any number of variables, and (b) one where we restrict LASSO to select a fixed number of variables.
In the latter, we restrict LASSO to select as many variables as either \fFBS{K} or \FBS{}.
This is done for two reasons: (a) because LASSO tends to select many variables otherwise, giving it an advantage over \FBS{} and \fFBS{K} in terms of predictive performance, and (b) because this allows us to evaluate how well \fFBS{K} and \FBS{} order the variables in comparison to LASSO.
Appendix~\ref{app:results} contains all results in detail, as well as results of the running time of each method.

\subsection{Experimental Setup}

\textbf{Algorithms}.
We evaluated three instances of the proposed \fFBS{K} algorithm, \fFBS{0}, \fFBS{1} and \fFBS{\infty}.
\fFBS{K} was compared to the standard \FBS{} algorithm and to feature selection with LASSO (a.k.a. L1-regularized logistic regression), called LASSO-FS hereafter.
We used the glmnet implementation \citep{glmnet} of LASSO, with all parameters set to their default values except for the maximum number of $\lambda$ values.

\noindent\textbf{Performance metrics}.
For model selection, we used the deviance as a performance metric.
In the experiments we compute two metrics, the area under the ROC curve (AUC) and classification accuracy (ACC); we do not report the deviance, as it isn't as interpretable as the AUC and accuracy metrics.
To get a binary prediction from logistic regression models, we used the threshold 0.5 on the predicted probabilities.

\noindent\textbf{Feature selection hyper-parameters}.
As selection criteria we used the EBIC criterion \citep{EBIC2008} and a nested likelihood-ratio independence test based on logistic regression.
The $\gamma$ parameter for the EBIC criterion was set to $0, 0.5, 1$ and the default value $1 - 0.5 \cdot \log(n) / \log(p)$, all of which are special values for $\gamma$ and were used in \citep{EBIC2008}.
For the independence test (IT), we used $0.001, 0.01, 0.05$ and $0.1$ as values for the significance level $\alpha$, covering a range of commonly used values.
For LASSO-FS we set the maximum number of values for $\lambda$ to 1000.

\noindent\textbf{Model selection and performance estimation protocol}.
In order to perform model selection and performance estimation, we used a $60/20/20$ stratified split of the data using $60\%$ as a training set, $20\%$ as a validation set and the remaining $20\%$ as a test set.
A hyper-parameter configuration (called configuration hereafter) is defined as a combination of a feature selection algorithm and its hyper-parameters, as well as a modeling algorithm and its hyper-parameters.
Given a set of configurations, the best one is chosen by training models for all of them on the training set and selecting the configuration of the model that performs best on the validation set.
Finally, the predictive performance of that configuration is obtained by training a final model on the pooled training and validation sets, and evaluating it on the test set.
To get more stable performance estimates, this procedure was repeated multiple times for different splits and averages over repetitions are reported.
For datasets with more than 1000 samples, the number of repetitions was set to 10, and to 50 for the rest.
Next, we describe the model selection procedure in more detail.

\subsection{Model Selection Procedure}
In order to obtain a predictive model after performing feature selection, we ran an L1-regularized logistic regression (called LASSO-PM hereafter); a justification of that choice is given below.
Again, we used a total of 1000 values for $\lambda$.

For LASSO-FS (without a limit on the number of variables), a model was obtained by running LASSO on the training set, evaluating the out-of-sample deviance of each produced model on the validation set, and selecting the $\lambda$ value that corresponds to the best performing model.
This can be done, as LASSO is used both for feature selection and modeling, and thus it is not necessary to first select a set of features, and then train additional models to find the best configuration.
In contrast, for each configuration involving \FBS{} or \fFBS{K}, we first perform feature selection and then train multiple models with LASSO-PM using the selected features on the training set, and then select the best $\lambda$ value using the validation set.
When limiting the number of variables LASSO-FS can select to $M$, we first run LASSO-FS with 10000 values for $\lambda$ and select the variables of the first model containing at least $M$ non-zero coefficients.
As with \FBS{} and \fFBS{K}, we then train multiple models with LASSO-PM and select the best $\lambda$ value on the validation set.

Note that, a set of configurations may contain one or multiple feature selection algorithms, as well as one or multiple hyper-parameters for each such algorithm.
In some experiments, we optimize each feature selection algorithm and hyper-parameter separately (for example, \FBS{} with IT and $\alpha = 0.01$), while in others we report the performance of a feature selection algorithm by optimizing over all hyper-parameters (for example, \FBS{} with IT and multiple $\alpha$ values).

\subsection*{Choice of LASSO-PM}
\label{sec:lassopm}

\noindent\textbf{Necessity of regularization}.
A natural choice for a modeling algorithm would be standard, unpenalized logistic regression.
One issue with this is that the resulting model would have inflated coefficients due to the feature selection procedure \citep{Flom2007}, reducing its predictive ability.
LASSO isn't affected by this as much, as it shrinks the coefficients while simultaneously performing feature selection, which is one of the reasons it is so successful.
Furthermore, for a fair comparison, one would also have to fit an unpenalized model for LASSO-FS, which wouldn't make much sense.
Therefore, it makes sense to \textit{use modeling algorithms that perform some kind of regularization}.
In order to support this choice, we repeated all experiments using unpenalized logistic regression as the modeling algorithm; the results are summarized in Appendix~\ref{app:results}.
In short, the results agree with what was expected, while also suggesting that methods selecting many features are mostly affected by this effect.

\noindent\textbf{Linear vs non-linear models}.
Some candidate methods with regularization are L1, L2 or elastic net regularized logistic regression, all of which are linear models, or non-linear models such as support vector machines or random forests, which (explicitly or implicitly) also perform some kind of regularization.
For a fair comparison between all feature selection methods, we argue that one should use a linear model instead of a non-linear one, as all feature selection methods used are based on logistic regression and thus will be able to identify linear (or monotonic) relations.
Running a non-linear model afterwards may favor methods which happen to select features that have a non-linear dependency with the outcome.
Furthermore, it is harder to tune a model such as SVMs, even with a linear kernel, than to tune a logistic regression model.
Between the three linear options, we chose L1-regularization (LASSO-PM) to avoid selecting features with LASSO-FS and then using a different model afterwards.
Note that, \textit{this choice may favor LASSO-FS, as it performs feature selection and modeling simultaneously}.

\noindent\textbf{Other considerations}.
As a final comment, we point out that the choice of LASSO-PM as a modeling algorithm tends to favor feature selection methods that select many features, as it will implicitly perform an additional feature selection step afterwards.
Thus, if for example an algorithm selects all important predictors but one, while another selects all important ones along with many irrelevant ones, the latter will in general perform better with LASSO-PM (depending on factors such as sample size and number of irrelevant predictors).
However, there is no reason to not perform regularization afterwards: apart from the reasons mentioned previously, in a real-world scenario one is not restricted to the model used while performing feature selection, but can (and should) try out other models.
Using LASSO-PM is a good compromise, as it performs shrinkage while being linear.
Thus, \textit{the modeling may favor algorithms that select many features, even if they are irrelevant or redundant}.
Of course, what really matters in practice is not only the performance achieved by the methods, but also the total number of selected features; methods that select many irrelevant features may perform similarly to methods selecting mostly relevant ones, but the latter are arguably more useful in practice.

\subsection{\FBS{} vs \fFBS{K}}

In this section we compare \fFBS{K} for $K = 0,1$ and $\infty$ with the standard \FBS{} algorithm, in terms of predictive performance, number of selected variables and number of performed independence tests.
\textit{The main goal of this comparison is to show that \fFBS{K} and \FBS{} perform similarly for the same hyper-parameters, with the former being faster}.

Model selection was performed for each algorithm and each hyper-parameter value separately.
We use the results of \FBS{} as a baseline, and compare them to \fFBS{K} when using the same hyper-parameters (for example, \FBS{} vs \fFBS{0} with IT and $\alpha$ = 0.01).
Results for when hyper-parameters are optimized are presented in the next section where we also compare all methods to LASSO-FS.
Next, we will present a summary of all results; tables containing detailed results for all algorithms can be found in Appendix~\ref{app:results}.

\begin{figure}[t!]
\begin{subfigure}[t]{0.475\textwidth}
\centering
\includegraphics[width=\textwidth]{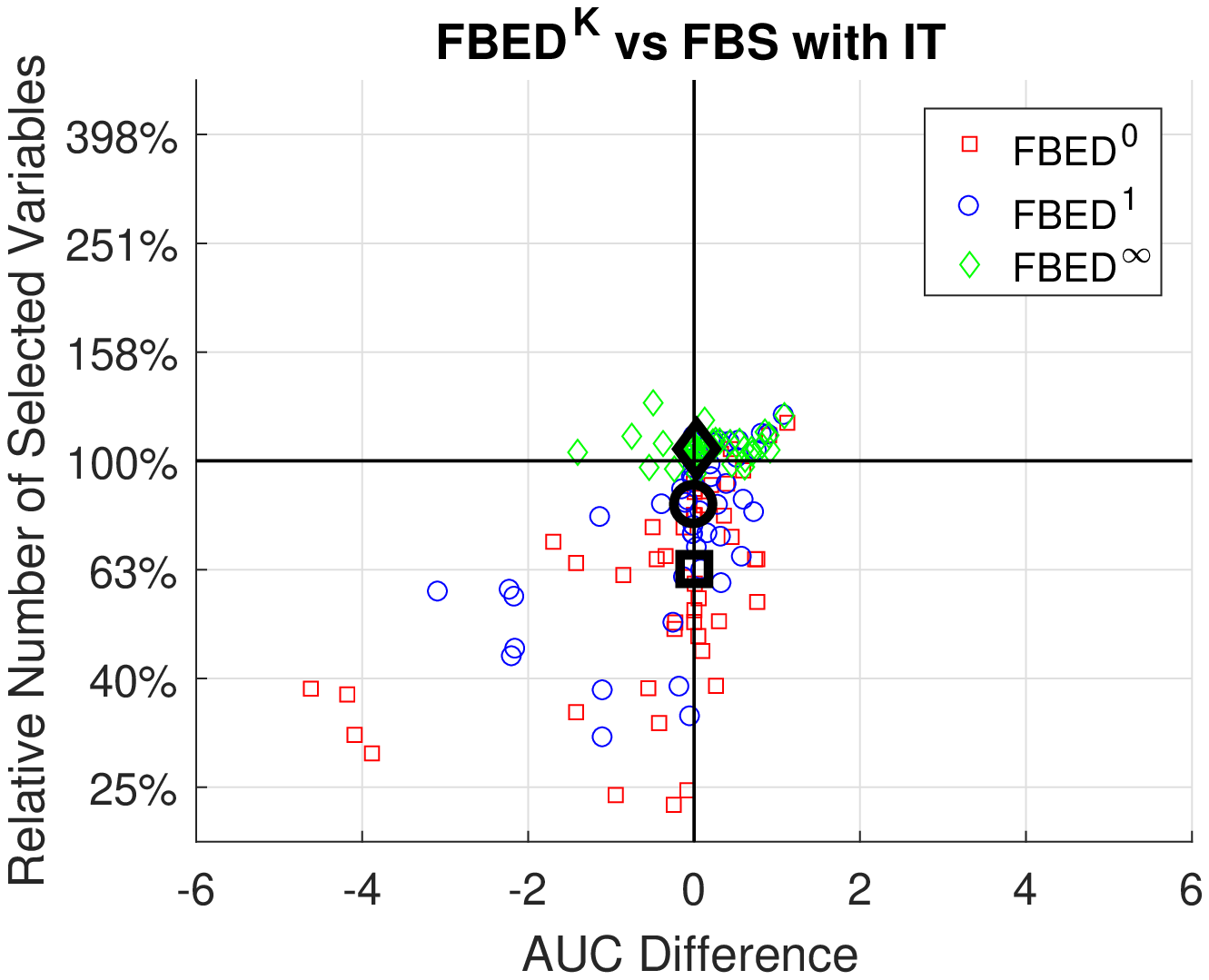}
\end{subfigure}
~
\begin{subfigure}[t]{0.475\textwidth}
\centering
\includegraphics[width=\textwidth]{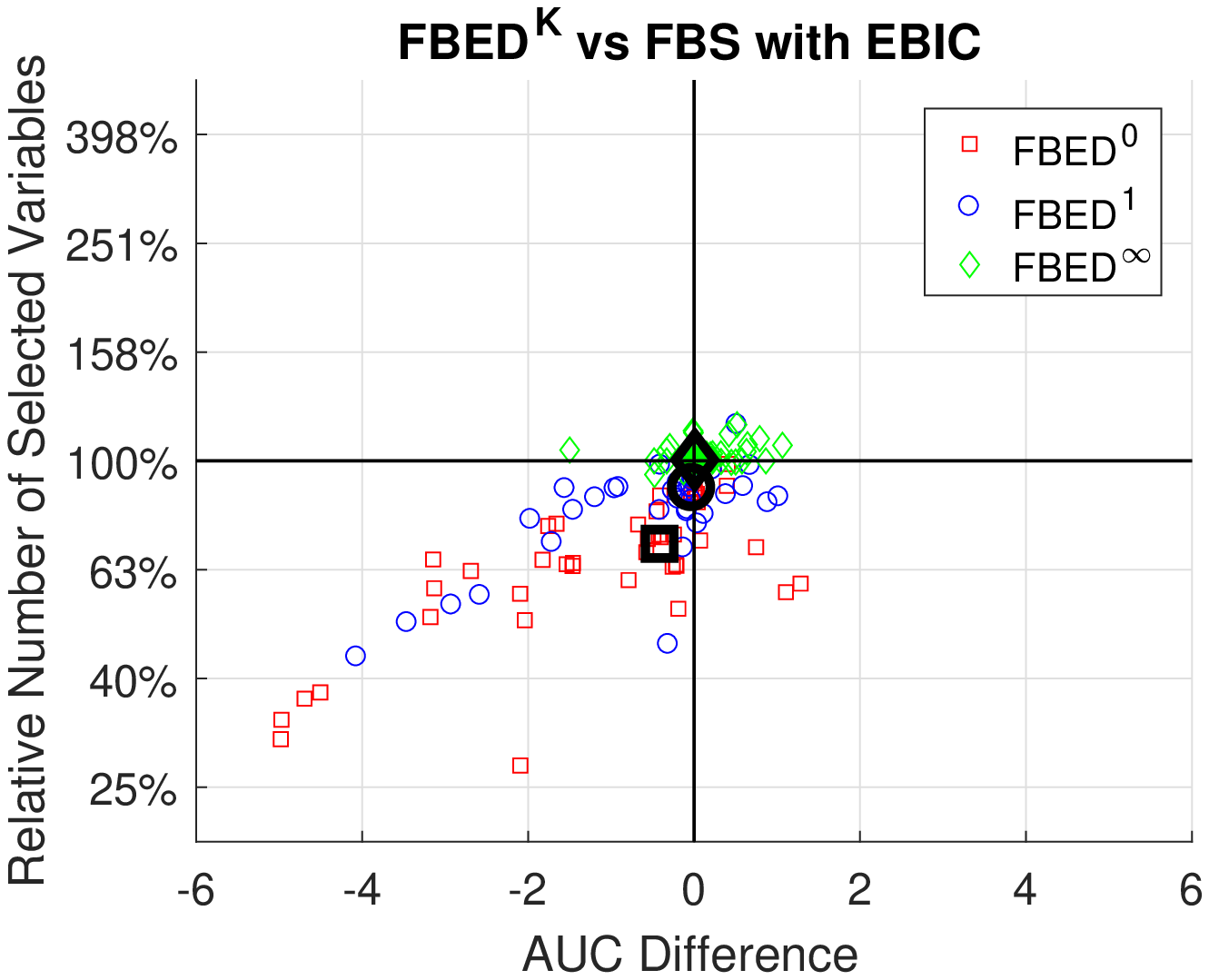}
\end{subfigure}
\\
\\
\begin{subfigure}[t]{0.475\textwidth}
\centering
\includegraphics[width=\textwidth]{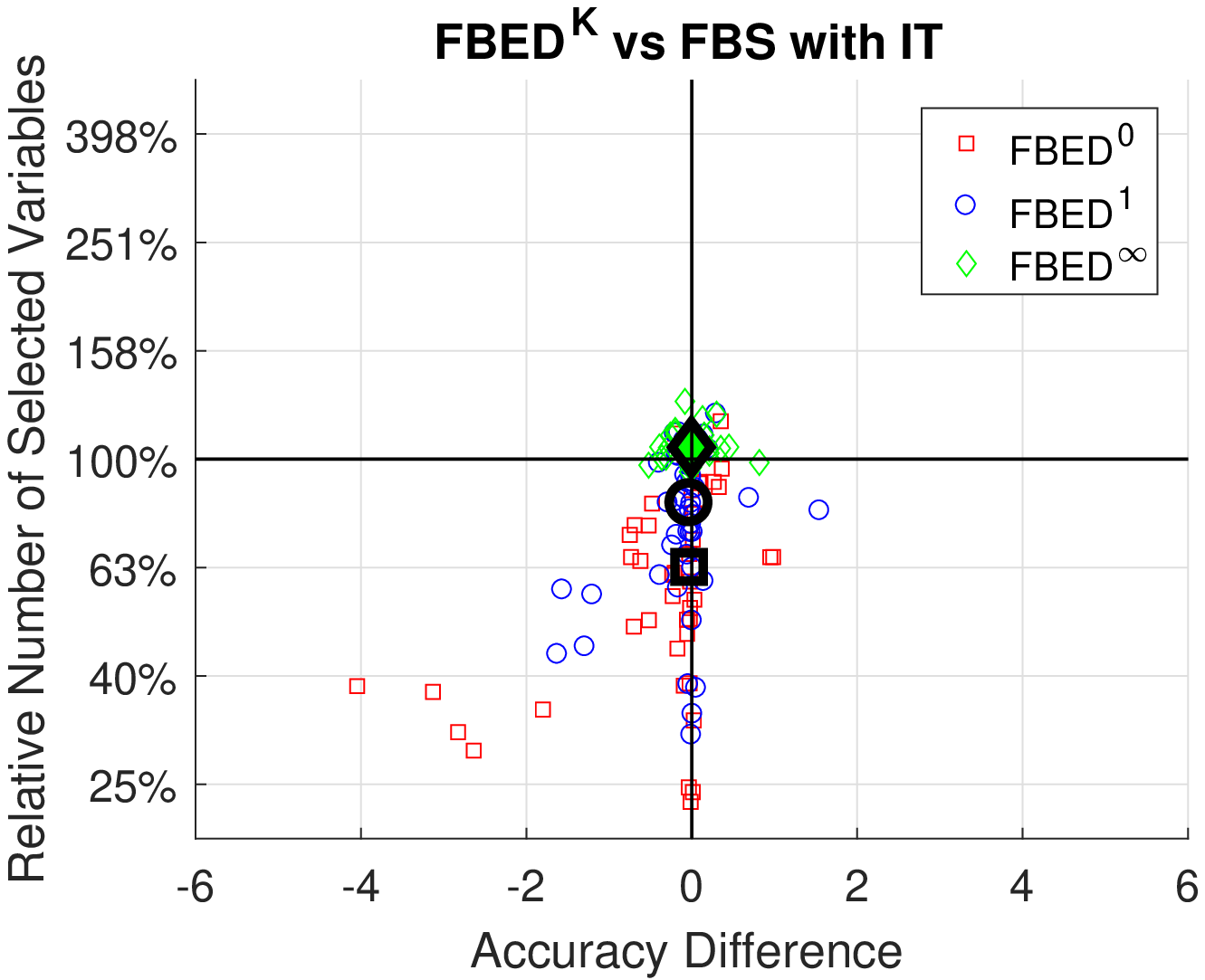}
\end{subfigure}
~
\begin{subfigure}[t]{0.475\textwidth}
\centering
\includegraphics[width=\textwidth]{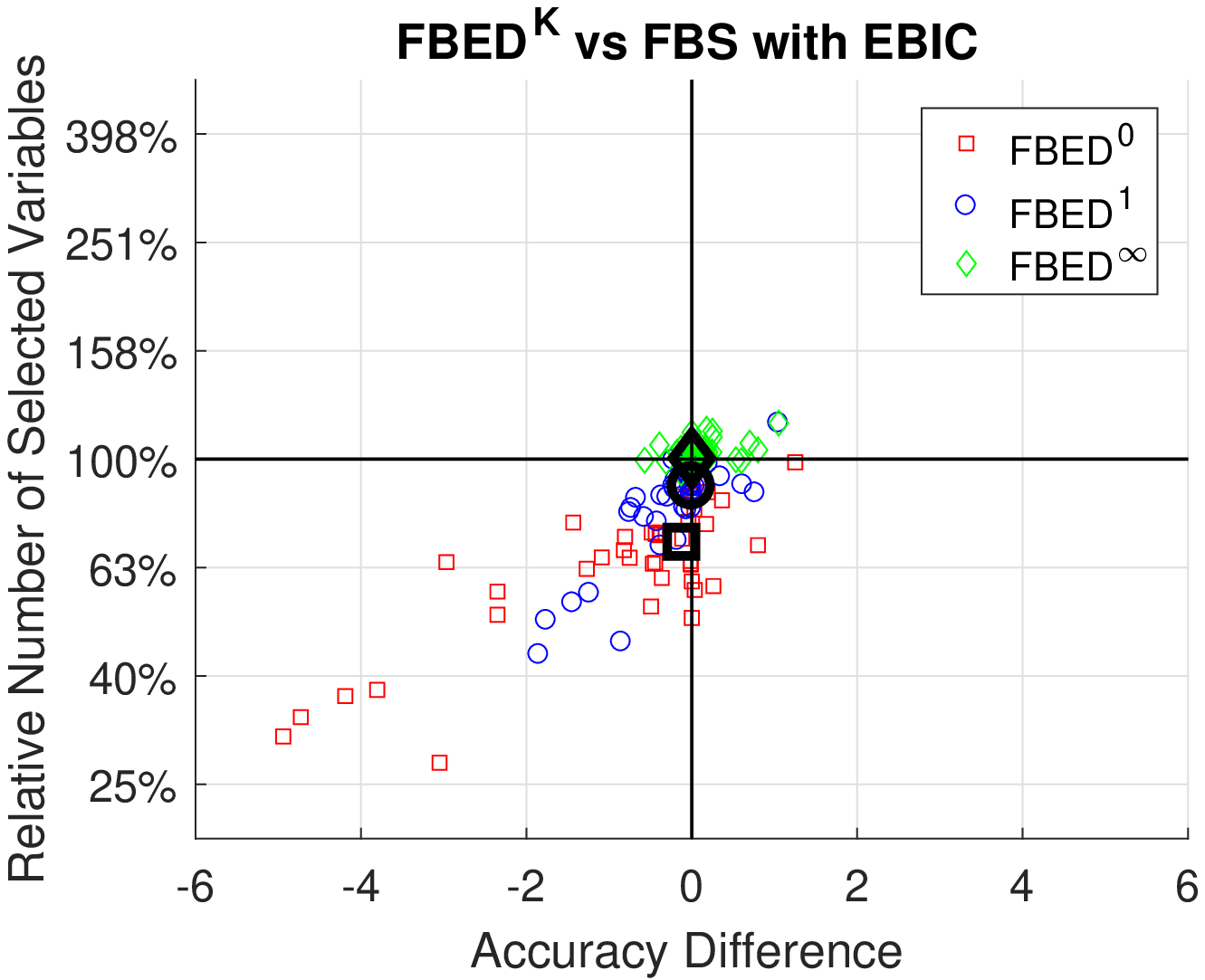}
\end{subfigure}
\caption{
\textbf{Predictive Performance vs Number of Selected Variables:} 
The x-axis shows the difference in performance of \FBS{} and \fFBS{K}, with positive values indicating that \fFBS{K} performs better than \FBS.
The y-axis shows the relative number of selected variables between \FBS{} and \fFBS{K}, with values below 100\% indicating that \fFBS{K} selects fewer variables.
Black points show the median value of the respective algorithm, computed separately for each axis.
In summary, \fFBS{0} and \fFBS{1} perform similarly to \FBS{} while selecting fewer variables, whereas \fFBS{\infty} is comparable to \FBS{}.
}
\label{fig:fsvsffs_perf}
\end{figure}

Figure~\ref{fig:fsvsffs_perf} shows how the algorithms compare in terms of predictive performance and number of selected variables.
The black points correspond to the median performance and relative number of selected variables of each algorithm, computed separately for each value, which means that the actual point does not necessarily correspond to any actual hyper-parameter value.
The median was chosen instead of the mean, because (a) it is more reasonable than averaging differences in performance across different datasets, and (b) as the mean would be misleading due to the existence of a few outliers which, as we explain below, correspond to a single dataset.
On the x-axis, the difference in AUC or ACC between \FBS{} and \fFBS{K} is shown (Performance(\FBS{}) - Performance(\fFBS{K})), while the y-axis shows the relative number of selected variables (SelectedVars(\fFBS{K})/SelectedVars(\FBS{})) on a logarithmic scale (that is, points that are equidistant from the horizontal line at 100\% are inversely related).
Thus, points in the upper-left corner are points in which \FBS{} outperforms \fFBS{K} both in terms of performance and selected variables, while the opposite holds for points in the lower-right corner.
Overall, it can be seen that most points fall close to the vertical line, that is, the algorithms perform similarly on most datasets and for most hyper-parameter values.
A more detailed comparison follows.

\begin{itemize}
\item \textbf{\FBS{} vs \fFBS{\infty}}: 
As expected from theory, they perform similarly, both in terms of predictive performance as well as in terms of selected variables. 
\fFBS{\infty} tends to select slightly more variables, as most points fall above the horizontal line.
\item \textbf{\FBS{} vs \fFBS{0} and \fFBS{1}}: 
Both versions compare favorably to \FBS{} in terms of number of selected variables, while usually displaying similar predictive performance.
\FBS{} performs better for some cases, which correspond to the musk dataset, but selects significantly more variables.
With the EBIC criterion \fFBS{0} and \fFBS{1} tend to perform slightly worse, which is due to the fact that EBIC tends to be overly conservative in some cases, especially with large values of $\gamma$.
\end{itemize}

Next, we compare the number of independence tests performed by each algorithm.
Again, we use \FBS{} as a baseline and compare them on the same hyper-parameter values.
Figure~\ref{fig:fsvsffs_tests} shows the distribution of the relative number of independence tests performed by each \fFBS{K} algorithm.
The blue line shows the median values.
As can be clearly seen, \fFBS{K} always outperforms \FBS{}.
Note that this comparison does not show the actual running time of each algorithm.
The running time of an independence test depends on the total number of variables involved in it, thus, methods selecting more variables will (most likely) take more time.
Therefore, in practice the difference in running time between \FBS{} and \fFBS{K} (especially \fFBS{0} and \fFBS{1}) may be even larger. 
Detailed results of the running time of each algorithm on the full datasets are shown in Appendix~\ref{app:results}.
Overall, \fFBS{0} and \fFBS{1} perform around 1-2 orders if magnitude fewer tests than \FBS{}, with \fFBS{1} being slightly slower, while \fFBS{\infty} performs typically around 25\%-30\% of the tests.

In general, \fFBS{0} and \fFBS{1} are preferable over \FBS{} and \fFBS{\infty}.
For problems with many variables, \fFBS{0} and \fFBS{1} clearly dominate \FBS{} and \fFBS{\infty} in terms of running time.
Furthermore, \fFBS{0} and \fFBS{1} may be preferable as they select fewer variables, which is especially important for small sample sizes, where selecting many variables may lead to loss of power and overfitting.
In large sample size settings, and if the number of variables is relatively small (in the hundreds or up to few thousands), \fFBS{\infty} and \FBS{} are reasonable choices, with the former being more preferable, as it is able to scale to larger variable sizes.

\begin{figure}[t!]
\begin{subfigure}[t]{0.475\textwidth}
\centering
\includegraphics[width=\textwidth]{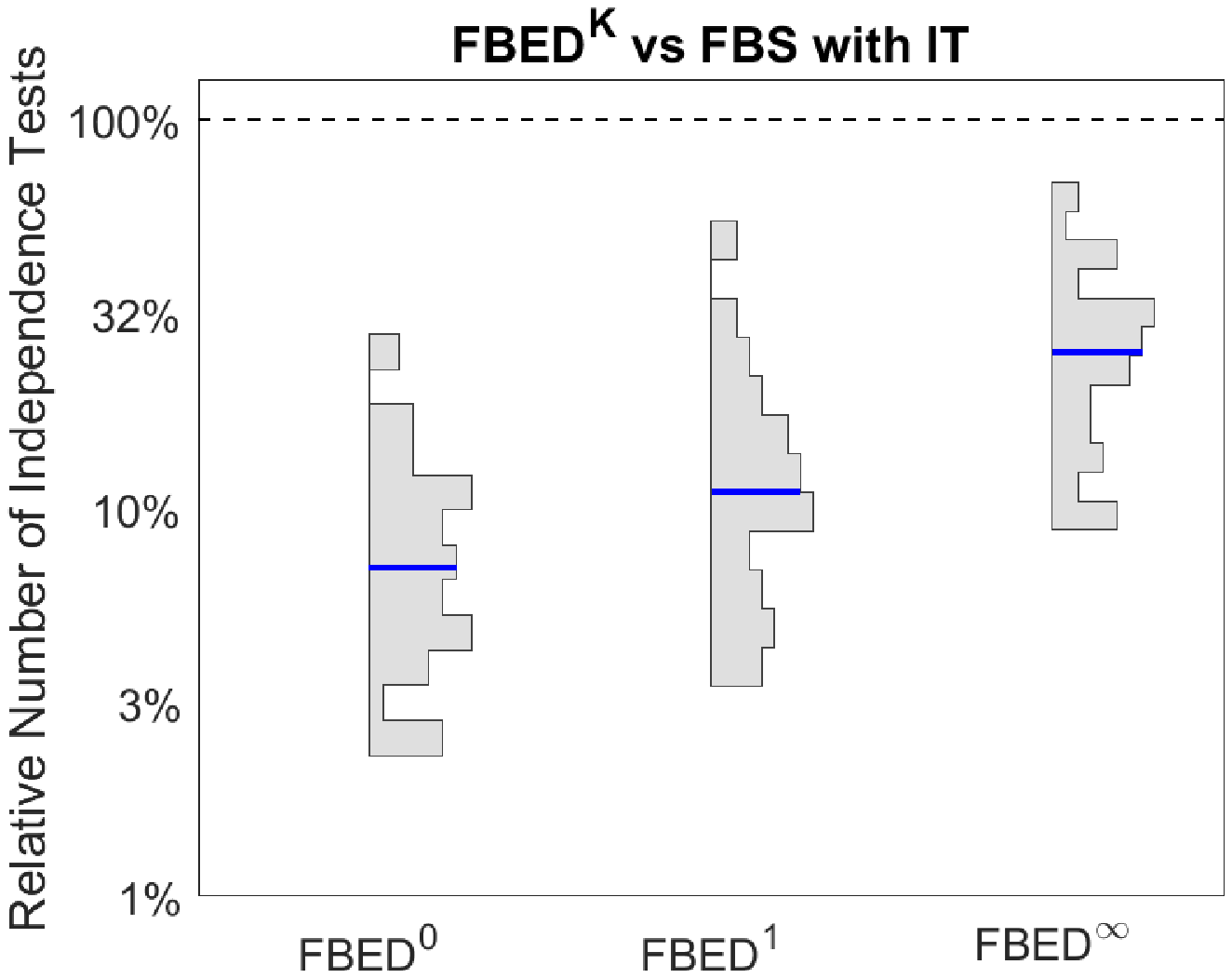}
\end{subfigure}
~
\begin{subfigure}[t]{0.475\textwidth}
\centering
\includegraphics[width=\textwidth]{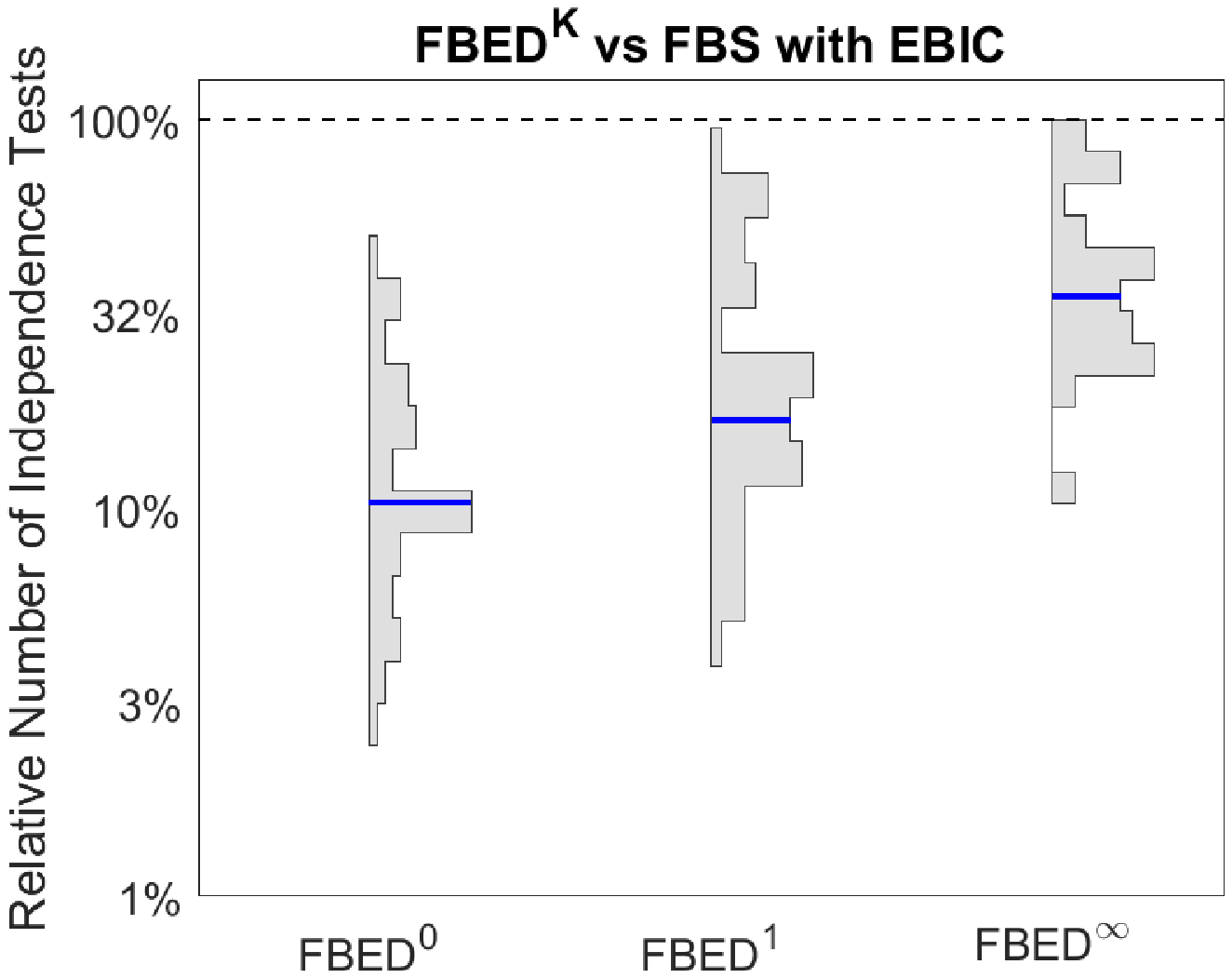}
\end{subfigure}
\caption{
\textbf{Number of Performed Independence Tests:} 
The figure shows the distributions of the relative number of tests performed by each algorithm, in comparison to \FBS{}.
The comparison is between \FBS{} and \fFBS{K} using the same hyper-parameter values and on all datasets.
Values below 100\% indicate that \fFBS{K} performed fewer tests.
The median value on each histogram is shown with a blue line.
Overall, the \fFBS{K} variants perform significantly fewer tests than \FBS{}.
}
\label{fig:fsvsffs_tests}
\end{figure}

\subsection{Comparison with LASSO-FS}
In this section, we compare \fFBS{K} and \FBS{} to LASSO-FS.
For each algorithm and selection criterion, we optimized over all hyper-parameter values.
Note that, this comparison favors LASSO-FS, as it is allowed to use up to 1000 values for $\lambda$, while \fFBS{K} and \FBS{} use only 4 values for each selection criterion.
The main reason for that is that it takes much more time to optimize over many values of $\alpha$ or $\gamma$.
\textit{The main objective of this comparison is to compare \fFBS{K} to \FBS{} and LASSO-FS in a realistic scenario, where hyper-parameter values are optimized.}
Furthermore, we will also compare EBIC with IT.

For a fairer comparison, we performed an additional simulation where we compare the optimized results of \fFBS{K} and \FBS{} to LASSO-FS, by fixing the minimum number of variables LASSO-FS selects; this was done for each algorithm, selection criterion and hyper-parameter value separately.
\textit{This simulation aims at showing how \fFBS{K} and \FBS{} compare to LASSO-FS when selecting the same number of variables, or in other words, how well each algorithm orders the variables.}

\subsection*{Comparison when optimizing over hyper-parameters}
\label{sec:exp:fs:opt}

\setlength{\tabcolsep}{.25em}
\begin{table*}[!t]
\centering
	\caption{
	Area under the ROC curve, classification accuracy and number of selected variables for each \fFBS{K} method and \FBS{} using independence tests or EBIC and LASSO-FS with 1000 $\lambda$ values. 
	The results are obtained after optimizing the hyper-parameters of the feature selection and modeling algorithms. 
	Bold and italic entries denote that the method is significantly better or worse than all other methods respectively. 
	The score is the average rank of each method over all datasets and the final rank is computed using those scores.}
    \label{tbl:opt}
  \fontsize{6pt}{8pt}\selectfont
  \begin{tabular}{ccl|*{12}{c}|cc}
    \toprule
    &&
    Algorithm &
    \text{musk} &
    \text{sylva}&
    \text{madelon} &
    \text{secom} &
    \text{gina} &
    \text{hiva} &
    \text{gisette} &
    \text{p53} &
    \text{arcene} &
    \text{nova} &
    \text{dexter} &
    \text{dorothea} &
    \text{Score} &
    \text{Rank}
\\
\midrule
\multirow{9}{*}{\rotatebox{90}{AUC}}
&& \fFBS{0} - IT & 93.0 & 99.9 & 63.4 & 67.8 & 93.4 & 69.3 & 99.4 & 95.0 & 78.6 & 93.7 & 97.0 & 84.3 & 3.75 & 2 \\
&& \fFBS{1} - IT & 94.7 & 99.9 & 63.1 & 66.3 & 93.6 & 69.2 & 99.4 & 95.0 & 79.0 & 94.0 & 96.9 & 85.0 & 4.33 & 4 \\
&& \fFBS{\infty} - IT & 96.9 & 99.9 & 63.1 & 66.4 & 93.8 & 68.6 & 99.4 & 94.7 & 78.8 & 94.2 & 97.0 & 84.9 & 4.75 & 5 \\
&& \FBS{} - IT & 96.9 & 99.9 & 63.1 & 68.0 & 93.7 & 70.3 & 99.4 & 95.1 & 77.6 & 93.2 & 96.4 & 84.5 & 4.08 & 3 \\
&& \fFBS{0} - EBIC & \textit{90.9} & 99.9 & 63.4 & 66.3 & \textit{92.7} & 69.5 & 99.2 & 93.9 & 77.1 & 92.6 & 97.1 & 81.4 & 6.33 & 6 \\
&& \fFBS{1} - EBIC & 92.8 & 99.9 & 63.3 & 64.4 & 93.1 & 69.1 & 99.3 & 94.8 & 77.6 & 93.5 & 97.0 & 83.2 & 6.63 & 8 \\
&& \fFBS{\infty} - EBIC & 95.7 & 99.9 & 63.3 & 65.3 & 93.3 & 68.8 & 99.3 & 95.0 & 76.7 & 93.2 & 96.9 & 83.4 & 6.38 & 7 \\
&& \FBS{} - EBIC & 95.6 & 99.9 & 63.3 & 65.2 & 93.3 & 68.7 & 99.4 & 94.5 & 76.3 & 92.4 & 96.4 & 82.4 & 7.33 & 9 \\
&& LASSO-FS & \textbf{97.3} & 99.9 & 63.3 & 69.2 & \textbf{94.2} & 70.3 & \textbf{99.6} & 96.3 & \textbf{83.7} & \textbf{96.1} & \textbf{98.2} & \textbf{88.1} & 1.42 & 1 \\
\midrule
\multirow{9}{*}{\rotatebox{90}{Accuracy}}
&& \fFBS{0} - IT & 91.8 & 99.3 & 60.6 & 93.1 & 86.7 & 96.7 & 97.1 & 99.2 & 71.9 & 91.5 & 90.8 & 92.7 & 4.38 & 2 \\
&& \fFBS{1} - IT & 92.9 & 99.3 & 60.3 & 93.1 & 86.9 & 96.7 & 97.2 & 99.2 & 72.3 & 91.5 & 90.4 & 92.7 & 4.42 & 3 \\
&& \fFBS{\infty} - IT & 94.5 & 99.3 & 60.5 & 93.1 & 87.0 & 96.7 & 97.0 & 99.2 & 71.7 & 91.4 & 90.3 & 92.6 & 5.58 & 5 \\
&& \FBS{} - IT & 94.5 & 99.3 & 60.4 & 93.1 & 87.0 & 96.7 & 97.0 & 99.2 & 70.6 & 91.5 & 89.9 & 92.7 & 4.79 & 4 \\
&& \fFBS{0} - EBIC & \textit{89.4} & 99.2 & 60.4 & \textbf{93.3} & \textit{85.7} & 96.7 & 96.5 & 99.2 & 71.3 & 91.1 & 90.7 & 93.0 & 5.75 & 7 \\
&& \fFBS{1} - EBIC & 91.9 & 99.2 & 60.5 & 93.2 & 86.1 & 96.7 & 96.7 & 99.2 & 70.4 & 91.0 & 90.7 & 92.7 & 5.83 & 8 \\
&& \fFBS{\infty} - EBIC & 93.5 & 99.2 & 60.5 & 93.2 & 86.5 & 96.7 & 96.9 & 99.2 & 69.9 & 90.8 & 90.5 & 92.7 & 5.75 & 7 \\
&& \FBS{} - EBIC & 93.4 & 99.2 & 60.5 & 93.2 & 86.4 & 96.7 & 96.9 & 99.2 & 68.9 & 91.2 & 90.1 & 92.8 & 6.13 & 9 \\
&& LASSO-FS & \textbf{94.9} & \textbf{99.4} & 61.0 & 93.0 & 87.2 & 96.7 & \textbf{97.8} & 99.2 & \textbf{76.4} & 90.6 & \textbf{93.2} & 93.1 & 2.38 & 1 \\
\midrule
\multirow{9}{*}{\rotatebox{90}{Selected Variables}}
&& \fFBS{0} - IT & 23.4 & 17.9 & 8.2 & 12.4 & 36.8 & 22.2 & 73.0 & 24.3 & 9.4 & 65.2 & 17.3 & 18.4 & 4.58 & 5 \\
&& \fFBS{1} - IT & 35.4 & 20.8 & 12.2 & 16.9 & 50.2 & 34.9 & 89.3 & 45.4 & 10.1 & 72.2 & 21.0 & 21.4 & 6.25 & 6 \\
&& \fFBS{\infty} - IT & 82.5 & 29.3 & 18.8 & 54.0 & 199.1 & \textit{96.4} & 87.2 & 72.1 & 11.8 & 76.3 & 22.2 & 23.1 & 8.21 & 8 \\
&& \FBS{} - IT & 78.4 & 29.3 & 18.0 & 44.5 & 161.6 & 59.9 & 79.6 & 59.5 & 10.6 & 65.9 & 17.9 & 20.1 & 6.96 & 7 \\
&& \fFBS{0} - EBIC & \textbf{13.9} & \textbf{11.0} & \textbf{2.7} & \textbf{3.3} & \textbf{23.1} & 6.1 & \textbf{42.6} & \textbf{12.1} & 7.7 & \textbf{40.7} & \textbf{13.7} & 8.9 & 1.17 & 1 \\
&& \fFBS{1} - EBIC & 21.5 & 11.8 & 3.7 & 4.0 & 27.5 & 7.1 & 53.4 & 16.5 & 8.0 & 51.6 & 15.6 & 13.5 & 2.33 & 2 \\
&& \fFBS{\infty} - EBIC & 37.9 & 12.1 & 4.0 & 5.3 & 32.7 & 8.2 & 73.9 & 19.3 & 8.4 & 55.7 & 17.4 & 14.2 & 3.88 & 4 \\
&& \FBS{} - EBIC & 37.8 & 12.1 & 4.0 & 5.8 & 32.9 & 9.3 & 75.8 & 19.3 & 6.7 & 51.1 & 15.2 & 7.4 & 3.21 & 3 \\
&& LASSO-FS & \textit{128.7} & \textit{73.2} & 13.3 & 21.0 & 164.5 & 43.7 & \textit{275.5} & \textit{114.6} & \textit{37.5} & \textit{314.4} & \textit{189.2} & \textit{36.9} & 8.42 & 9 \\
\bottomrule
  \end{tabular}
\end{table*}

A summary of the results of the first simulation study, averaged over repetitions, measuring the AUC, classification accuracy and number of selected variables is shown in Table~\ref{tbl:opt}.
To summarize the results, we computed the score of each algorithm, which is as the average rank of that algorithm over all datasets.
The rank of each algorithm is then computed based on its score.
Algorithms that are statistically significantly better than all others are shown in bold, whereas algorithms that are worse than the rest are shown in italic.
To test for statistical significance, we employed a bootstrap-based procedure.
Specifically, we used bootstrapping to compute the probability that algorithm $A_i$ is better/worse or equal than all others in terms of some measure of interest $f$ (for example, AUC), that is $P(\wedge_{j \neq i} f(A_i) \geq f(A_j))$.
An algorithm is considered the best/worst if it is so with probability at least $0.95$.
The procedure is described next.
Let $f_{i,k}$ denote the measure of interest of algorithm $i$ on test set $k$.
We resample with replacement $B = 100K$ times the test sets and compute $f$, denoted as $f_{i,k,b}$ for the b-th sample of algorithm $i$ and test set $k$.
Then, $f_{i,k,b}$ are averaged over test sets, obtaining $\hat{f}_{i,b}$.
The probability $P(\wedge_{j \neq i} f(A_i) \geq f(A_j))$ is then computed as $1/B \sum_b I(\hat{f}_{i,b} \geq \max_j \hat{f}_{j,b})$, where $I$ takes the value 1 if the expression inside it evaluates to true. 

LASSO-FS has the best predictive performance, being statistically significantly better in 7 and 5 datasets in terms of AUC and classification accuracy respectively. 
The difference is noticeable in datasets with many variables ($\geq$ 5000), and mostly for the arcene, dexter, and dorothea datasets.
It does however select more variables, being statistically significantly worse in 8 out of 12 datasets, including most datasets it performed better.

\FBS{} and \fFBS{K} perform similarly on all datasets, with the independence test based variants performing slightly better than the EBIC ones.
In terms of number of selected variables, IT-based variants tend to select more variables than EBIC-based ones.
Note that, this result highly depends on the hyper-parameter values used for the experiments.
EBIC is more limited in that aspect, as it does not allow one to select arbitrarily many variables, in contrast to IT, which tends to selects more variables with higher significance level.
In general, methods that select more variables tend to perform better in our experiment, as expected.

Overall, there is no clear winner, and the choice depends solely on the goal.
If the goal is predictive performance, LASSO-FS is clearly preferable.
If on the other hand one is interested in interpretability, then \fFBS{0} or \fFBS{1} are the best choices, with the latter being especially attractive due to its theoretical properties.
Furthermore, even though they do not perform as well as LASSO-FS, they still work reasonably well, with the additional advantage that they are the fastest among all forward-selection-based methods.
For instance, \fFBS{0} with independence tests selects relatively few variables while being ranked 2nd in terms of AUC and accuracy.
To summarize, \textit{if the task is predictive performance, LASSO-FS is to be preferred, while if the focus is interpretability, \fFBS{0} and \fFBS{1} are the best choices}.

We must note that those results are somewhat artificial, as the performance of \fFBS{K} and \FBS{} highly depends on the hyper-parameter values chosen for the experiment, while LASSO-FS is not as sensitive to those choices.
Furthermore, the fact that hyper-parameters are optimized based on performance naturally tends to favor methods that select more variables.
Therefore, the procedure puts LASSO-FS at a disadvantage in terms of interpretability.
The experiments presented next, comparing the performance of all algorithms while selected the same number of variables, were performed for exactly those reasons.

\subsection*{Limiting the number of selected variables}
\begin{figure}[t!]
\begin{subfigure}[t]{0.475\textwidth}
\centering
\includegraphics[width=\textwidth]{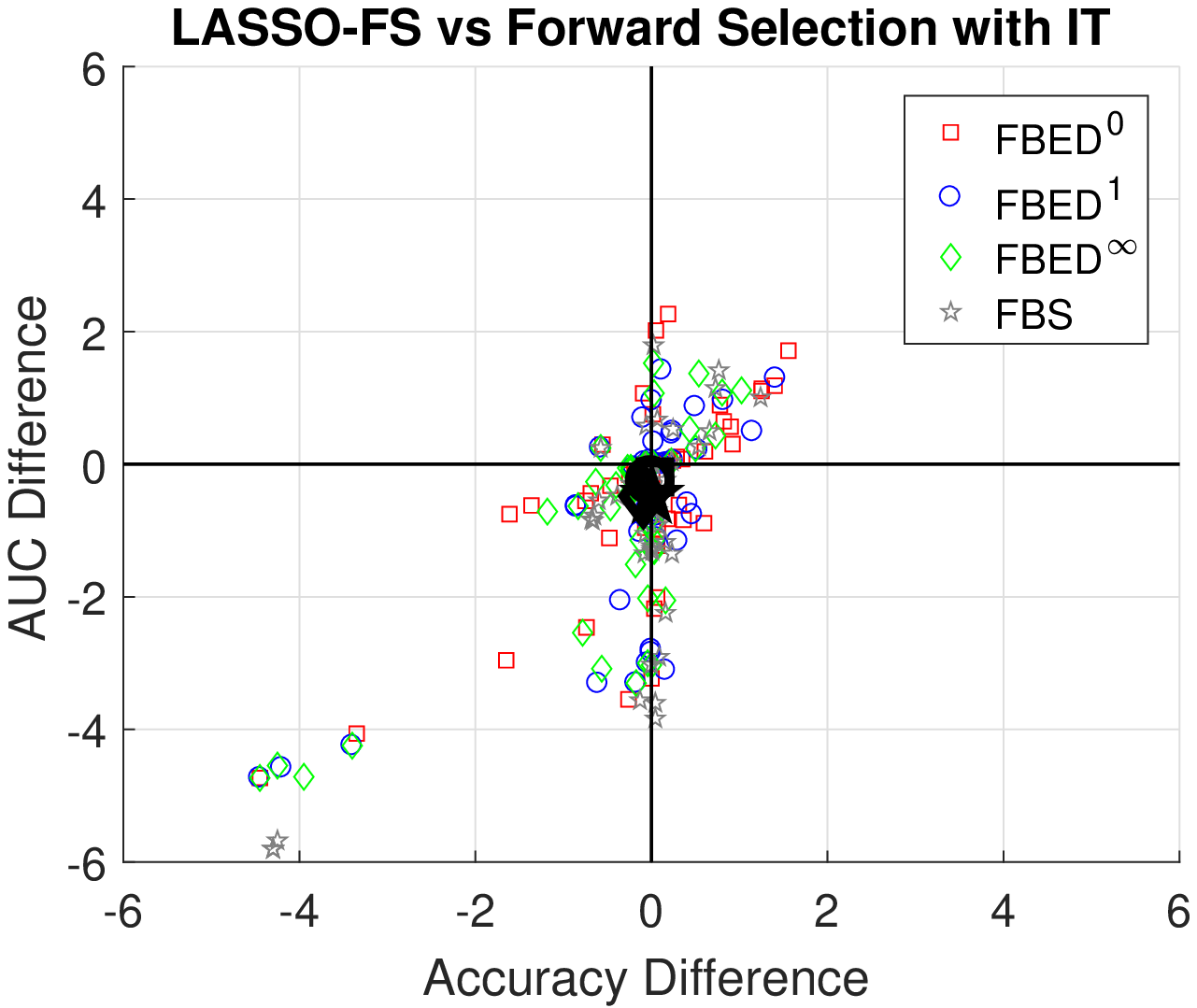}
%\caption{}
\end{subfigure}
~
\begin{subfigure}[t]{0.475\textwidth}
\centering
\includegraphics[width=\textwidth]{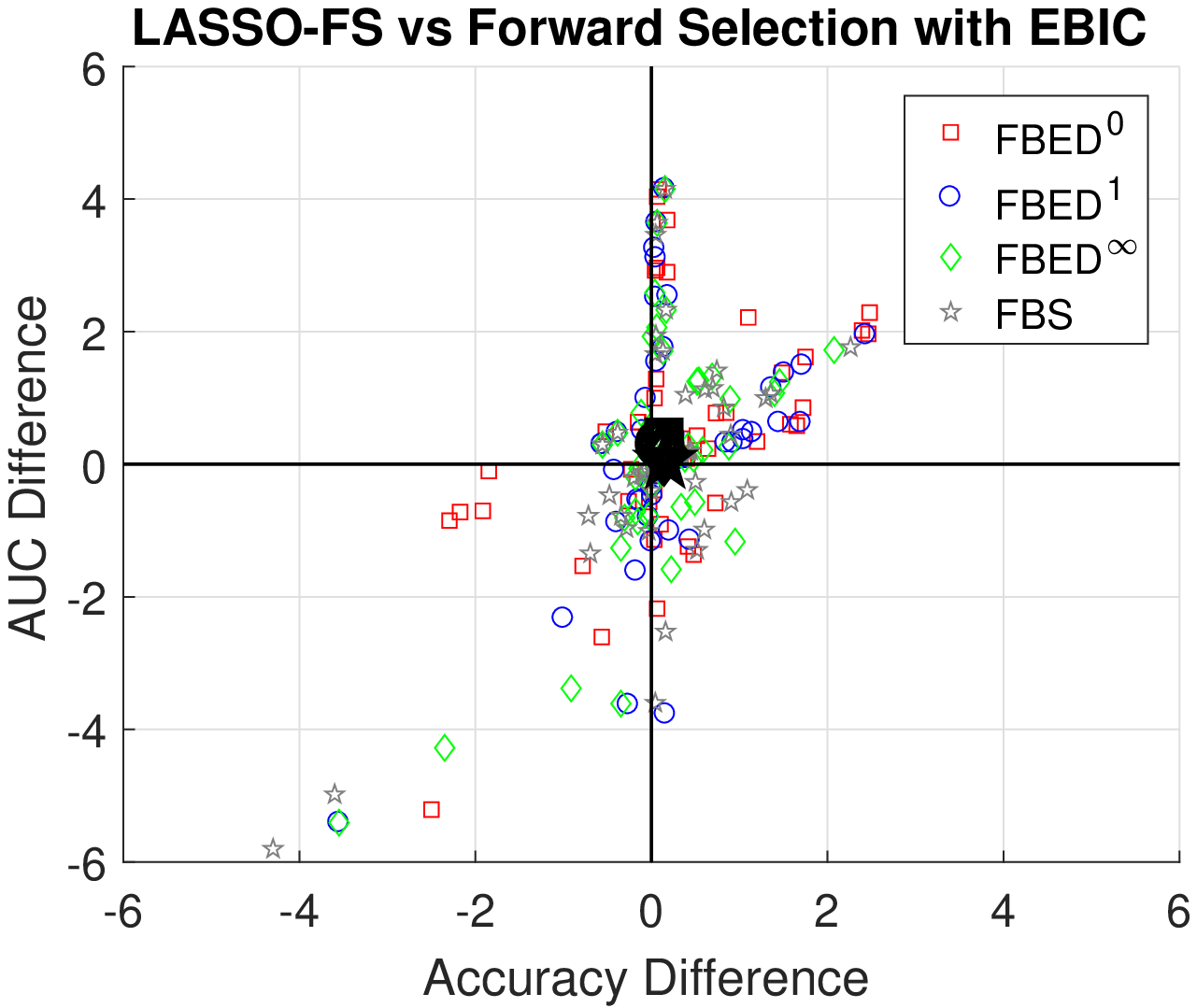}
%\caption{}
\end{subfigure}
\caption{
\textbf{LASSO-FS with limit on selected variables:} 
The x-axis and y-axis show the difference in accuracy and AUC respectively of LASSO-FS and forward selection based variants, with positive values indicating that LASSO-FS performs worse.
Black points show the median AUC and accuracy for each algorithm.
Overall, all algorithms perform similarly if the number of variables to select is limited.
}
\label{fig:lassovsfs}
\end{figure}

We compare \FBS{} and \fFBS{K} to LASSO-FS, when LASSO-FS is forced to select a fixed number of variables $M$.
We will refer to forward selection based methods to FS hereafter.
LASSO-FS with a limit on the variables was compared to each selection criterion and hyper-parameter of \FBS{} and \fFBS{K} (for example, \FBS{} with IT and $\alpha = 0.05$ vs LASSO-FS selecting the same number of variables).

The results are shown in Figure~\ref{fig:lassovsfs}.
The x-axis and y-axis correspond to the difference in accuracy and AUC respectively between LASSO-FS and FS, that is, Performance(FS) - Performance(LASSO-FS).
The black points correspond to the median accuracy and AUC difference for each algorithm.
Points in the upper-right corner are cases where FS dominates LASSO-FS in terms of both performance measures, while the opposite holds for points in the lower-left corner.
It can be seen that all methods perform similarly, with the median values falling very close to the center.
The few cases where LASSO-FS performs better correspond to the arcene dataset.
Note that, arcene is the dataset which contains the fewest number of samples, while also containing a large number of variables.
It has 200 samples and 10000 variables, and only 120/160 are used for training and validation respectively.
Arcene was also the dataset in which LASSO-FS outperformed the rest by a large margin in the previous experiments.
Theoretical results by \cite{Ng2004} show that LASSO-FS performs well in settings with low sample size and many irrelevant variables, as is the case for the arcene dataset.
It is not clear how forward selection based procedures perform in such cases, and whether this is simply an outlier or LASSO-FS is inherently superior.
It would be interesting to study this effect in more depth, but it is out of the scope of the current paper.
In summary, \textit{LASSO-FS works as well as \FBS{} and \fFBS{K} when the number of variables to select is the same}.

\subsection{Discussion}
We compared \FBS{}, \fFBS{K} and LASSO-FS across a variety of datasets and settings.
An interesting result is that all algorithms perform about equally well, when limited to select the same number of variables.
The main advantage of LASSO-FS is that it is easier and faster to tune, as results for different values of $\lambda$ can be obtained with a single run, at least for many important cases such as generalized linear models \citep{Friedman2010}.
Its main drawback is that it often requires specialized algorithms and treatment for different problems \citep{Meier2008, VanDeGeer2011, Ivanoff2016}, which may be non-convex  \citep{VanDeGeer2011} and computationally demanding \citep{Fan2010}.

\section{Conclusion}
We presented a heuristic to speed-up the forward-backward feature selection algorithm, which gives rise to a family of algorithms, called forward-backward selection with early dropping (\fFBS{K}).
We investigated the theoretical properties of three of its members, namely \fFBS{0}, \fFBS{1} and \fFBS{\infty}.
We show that, if the distribution of the data can be faithfully represented by a Bayesian network or maximal ancestral graph, \fFBS{1} and \fFBS{\infty} respectively can identify the Markov blanket of the target variable.
In experiments on real data we show that \fFBS{K} is significantly faster than \FBS{}, while performing similarly or better in terms of predictive performance.
Furthermore, \fFBS{K} and \FBS{} perform very similar to LASSO, when restricted to select the same number of variables, while being much more general.
Overall, among all forward-selection based algorithms, \fFBS{0} and \fFBS{1} offer the best trade-off in terms of predictive performance, running time and number of selected features.

\section*{Acknowledgments}
We would like to thank Vincenzo Lagani and Michalis Tsagris for their helpful comments.
The research leading to these results has received funding from the European Research Council
under the European Union's Seventh Framework Programme (FP/2007-2013) / ERC Grant
Agreement n. 617393.

\appendix

\section{Proofs}\label{app:theory}

We proceed by listing some axioms about conditional independence \citep{Pearl2000}, called \textbf{semi-graphoid} axioms, which will be useful later on.
Those axioms are general, as they hold for any probability distribution.
For all of the proofs we assume that the algorithms have access to an \textbf{independence oracle} that can perfectly determine whether a given conditional dependence or independence holds.
Furthermore, in all proofs we will use the terms d-connected/m-connected (d-separated/m-separated) and dependent (independent) interchangeably; this is possible due to the faithfulness assumption. 
\\
\\
\begin{centering}
\begin{tabular}{|l l|}
\hline
\textbf{Symmetry} &
$\condind{\mathbf{X}}{\mathbf{Y}}{\mathbf{Z}}
\Rightarrow
\condind{\mathbf{Y}}{\mathbf{X}}{\mathbf{Z}}$
\\
\textbf{Decomposition} &
$\condind{\mathbf{X}}{\mathbf{Y}\cup\mathbf{W}}{\mathbf{Z}}
\Rightarrow
\condind{\mathbf{X}}{\mathbf{Y}}{\mathbf{Z}}
\wedge
\condind{\mathbf{X}}{\mathbf{W}}{\mathbf{Z}}
$
\\
\textbf{Weak Union} &
$\condind{\mathbf{X}}{\mathbf{Y}\cup\mathbf{W}}{\mathbf{Z}}
\Rightarrow
\condind{\mathbf{X}}{\mathbf{Y}}{\mathbf{Z}\cup\mathbf{W}}
$
\\
\textbf{Contraction} &
$\condind{\mathbf{X}}{\mathbf{Y}}{\mathbf{Z}}
\wedge
\condind{\mathbf{X}}{\mathbf{W}}{\mathbf{Y}\cup\mathbf{Z}}
\Rightarrow
\condind{\mathbf{X}}{\mathbf{Y}\cup\mathbf{W}}{\mathbf{Z}}
$
\\
\hline
\end{tabular}
\end{centering}
\\

\noindent Using those axioms we prove the following lemma.

\begin{lemma}\label{lemma_ind}
Let $A$, $T$ be variables and $\mathbf{B}$, $\mathbf{C}$ sets of variables.
Then \\
$\condind{T}{A}{\textbf{B}\cup\mathbf{C}} \wedge \condind{T}{\textbf{B}}{\mathbf{C}} \Rightarrow \condind{T}{A}{\mathbf{C}}$ holds for any such variables.
\end{lemma}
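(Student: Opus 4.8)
The plan is to derive the conclusion by a single application of Contraction followed by Decomposition, treating the single variables $A$ and $T$ as singleton sets so that the semi-graphoid axioms apply verbatim. First I would instantiate the Contraction axiom with $\mathbf{X} = T$, $\mathbf{Y} = \mathbf{B}$, $\mathbf{W} = A$ and $\mathbf{Z} = \mathbf{C}$. Its two antecedents then read exactly $\condind{T}{\mathbf{B}}{\mathbf{C}}$ and $\condind{T}{A}{\mathbf{B}\cup\mathbf{C}}$, which are precisely the two hypotheses of the lemma. Contraction therefore yields $\condind{T}{A \cup \mathbf{B}}{\mathbf{C}}$.

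Second, I would apply Decomposition to $\condind{T}{A\cup\mathbf{B}}{\mathbf{C}}$ (with $\mathbf{X} = T$, $\mathbf{Y} = A$, $\mathbf{W} = \mathbf{B}$, $\mathbf{Z} = \mathbf{C}$), which gives $\condind{T}{A}{\mathbf{C}} \wedge \condind{T}{\mathbf{B}}{\mathbf{C}}$; discarding the second conjunct leaves the desired $\condind{T}{A}{\mathbf{C}}$. Since the semi-graphoid axioms hold for every probability distribution, the implication holds for any variables $A$, $T$ and sets $\mathbf{B}$, $\mathbf{C}$ as claimed.

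There is essentially no serious obstacle here; the only points to check are bookkeeping ones, namely that $A$ and $T$ may be read as singleton variable sets so the axioms (stated for sets) apply, and that the independence statements involved are between nonempty sets so the notation $\condind{\cdot}{\cdot}{\cdot}$ is well defined. If one preferred to avoid even the implicit singleton convention, an alternative route is to note that Weak Union and Contraction together with Symmetry suffice, but the two-line Contraction-then-Decomposition argument is the cleanest and is what I would present.
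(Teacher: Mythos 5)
Your proposal is correct and follows exactly the paper's own argument: Contraction applied to the two hypotheses gives $\condind{T}{A \cup \mathbf{B}}{\mathbf{C}}$, and Decomposition then yields $\condind{T}{A}{\mathbf{C}}$. The remarks about treating $A$ and $T$ as singleton sets are fine bookkeeping but add nothing beyond what the paper implicitly assumes.
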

\begin{proof}

\begin{flalign*}
&\condind{T}{A}{\textbf{B}\cup\mathbf{C}} \wedge \condind{T}{\textbf{B}}{\mathbf{C}} \Rightarrow & \text{ (\textbf{Contraction})} \\
&\condind{T}{A \cup \textbf{B}}{\mathbf{C}} \Rightarrow & \text{ (\textbf{Decomposition})} \\
&\condind{T}{A}{\mathbf{C}} \wedge \condind{T}{\textbf{B}}{\mathbf{C}}
\end{flalign*}
\end{proof}

\noindent The following lemma will be useful for proving some of the theorems.

\begin{lemma}\label{lemma_incr}
Let $\mathbf{V_{sel}}$ be a set of variables selected for some target $T$ and $\mathbf{V_{rem}} = \mathbf{V_{\mathcal{D}}} \setminus \mathbf{V_{sel}}$.
Assume that $\forall V_r \in \mathbf{V_{rem}} \ \condind{T}{V_r}{\mathbf{V_{sel}}}$ holds.
Then, if $\exists V_s \in \mathbf{V_{sel}}$ such that $\condind{T}{V_s}{\mathbf{V_{sel}} \setminus V_s}$ holds, $\forall V_r \in \mathbf{V_{rem}} \ \condind{T}{V_r}{\mathbf{V_{sel}} \setminus V_s}$ also holds.
\end{lemma}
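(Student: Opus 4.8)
The plan is to derive the conclusion as a direct, pointwise application of Lemma~\ref{lemma_ind}. Fix an arbitrary $V_r \in \mathbf{V_{rem}}$. Since $V_r \notin \mathbf{V_{sel}}$ while $V_s \in \mathbf{V_{sel}}$, we have $V_r \neq V_s$, and we may write $\mathbf{V_{sel}} = (\mathbf{V_{sel}} \setminus V_s) \cup V_s$. The first hypothesis, applied to this particular $V_r$, therefore reads $\condind{T}{V_r}{V_s \cup (\mathbf{V_{sel}} \setminus V_s)}$, and the second hypothesis is $\condind{T}{V_s}{\mathbf{V_{sel}} \setminus V_s}$.

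Now I would instantiate Lemma~\ref{lemma_ind} with $A = V_r$, $\mathbf{B} = V_s$ (as a one-element set), and $\mathbf{C} = \mathbf{V_{sel}} \setminus V_s$. Its premises $\condind{T}{A}{\mathbf{B} \cup \mathbf{C}}$ and $\condind{T}{\mathbf{B}}{\mathbf{C}}$ are exactly the two statements just obtained, so the lemma yields $\condind{T}{V_r}{\mathbf{V_{sel}} \setminus V_s}$. Since $V_r$ was an arbitrary element of $\mathbf{V_{rem}}$, this establishes $\forall V_r \in \mathbf{V_{rem}} \ \condind{T}{V_r}{\mathbf{V_{sel}} \setminus V_s}$, which is the claim.

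There is essentially no obstacle here; the only points that require a moment of care are bookkeeping rather than mathematical content: making sure that $V_r \neq V_s$ so that the conditioning set of the first hypothesis genuinely decomposes as $V_s \cup (\mathbf{V_{sel}}\setminus V_s)$, and noting that Lemma~\ref{lemma_ind} is applied with the singleton $\mathbf{B} = V_s$, which is admissible. Intuitively, the lemma says that once $V_s$ is independent of $T$ given the rest of the selected set, it was carrying no information, so dropping it cannot ``revive'' any of the already-irrelevant remaining variables. This is precisely what licenses the early-dropping step to persist across the removal of a redundant selected variable, which is why the lemma is singled out as a building block for the subsequent theorems.
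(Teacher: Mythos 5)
Your proof is correct and follows exactly the paper's own argument: apply Lemma~\ref{lemma_ind} with $A = V_r$, $\mathbf{B} = \{V_s\}$, $\mathbf{C} = \mathbf{V_{sel}} \setminus V_s$ for each $V_r \in \mathbf{V_{rem}}$, using the decomposition $\mathbf{V_{sel}} = V_s \cup (\mathbf{V_{sel}} \setminus V_s)$. The extra bookkeeping remarks (e.g., $V_r \neq V_s$) are fine but add nothing beyond the paper's version.
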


\begin{proof}
We are given that $\forall V_r \in \mathbf{V_{rem}} \ \condind{T}{V_r}{\mathbf{V_{sel}}}$ holds.
By applying Lemma \ref{lemma_ind} to each variable in $V_r \in \mathbf{V_{rem}}$ with $A = V_r$, $\mathbf{B} = \{V_s\}$ and $\mathbf{C} = \mathbf{V_{sel}} \setminus V_s$, we get that $\condind{T}{V_r}{V_s \cup (\mathbf{V_{sel}} \setminus V_s)} \wedge \condind{T}{V_s}{\mathbf{V_{sel}} \setminus V_s} \Rightarrow \condind{T}{V_r}{\mathbf{V_{sel}} \setminus V_s}$ holds for any such $V_r$, which concludes the proof.
\end{proof}

To put it simple, Lemma \ref{lemma_incr} states that if we remove any variable $V_s$ from a set of selected variables $\mathbf{V_{sel}}$  by conditioning on $\mathbf{V_{sel}} \setminus V_s$, no variable that is not in $\mathbf{V_{sel}}$ becomes conditionally dependent with $T$ given $\mathbf{V_{sel}} \setminus V_s$.
In practice this means that removing variables using backward selection from a set of variables selected by forward selection will not create any additional conditional dependencies, meaning that we do not have to reconsider them again.
\\
\\
\noindent\textbf{Proof of Theorem~\ref{thm:fbs}}
\begin{proof}
To show that $\mathbf{V_{sel}}$ is minimal, we have to show the following
\begin{enumerate}[i]
\item $\forall V_s \in \mathbf{V_{sel}} \ \conddep{T}{V_s}{\mathbf{V_{sel}} \setminus V_s}$ (\textit{No variable can be removed})
\item $\forall V_r \in \mathbf{V_\mathcal{D}} \setminus \mathbf{V_{sel}}, \condind{T}{V_r}{\mathbf{V_{sel}}}$ (\textit{No variable can be added})
\end{enumerate}

\noindent\textbf{Proof of (i)}:
This holds trivially, as backward selection removes any variable $V_s \in \mathbf{V_{sel}}$ if $\condind{T}{V_s}{\mathbf{V_{sel}} \setminus V_s}$ holds.

\noindent\textbf{Proof of (ii)}:
We know that after the termination of forward selection, no variable can be added, that is, $\forall V_r \in \mathbf{V_{rem}} \ \condind{T}{V_r}{\mathbf{V_{sel}}}$ holds.
Given that, Lemma~\ref{lemma_incr} can be repeatedly applied after each variable removal by backward selection, and thus no variable in $\mathbf{V_{rem}}$ can be added to $\mathbf{V_{sel}}$.
\end{proof}

\noindent\textbf{Proof of Theorem~\ref{thm:ffbs}}
\begin{proof}
As is the case with \FBS{}, the forward selection phase of \fFBS{\infty} stops if no more variables can be included.
Using this fact, the proof is identical to the one of Theorem \ref{thm:fbs}.
\end{proof}

\noindent\textbf{Proof of Theorem~\ref{thm:ffbs1mb}}
\begin{proof}
In the first run of \fFBS{1}, all variables that are adjacent to $T$ (that is, its parents and children) will be selected, as none of them can be d-separated from $T$ by any set of variables.
In the next run, all variables connected through a collider path of length 2 (that is, the spouses of $T$) will become d-connected with $T$, since the algorithm conditions on all selected variables (including its children), and thus will be selected.
The resulting set of variables includes the Markov blanket of $T$, but may also include additional variables.
Next we show that all additional variables will be removed by the backward selection phase.
Let MB($T$) be the Markov blanket of $T$ and $\mathbf{S_{ind}} = \mathbf{S} \setminus $MB($T$) be all selected variables not in the Markov blanket of $T$.
By definition, $\condind{T}{\mathbf{X}}{MB(T)}$ holds for any set of variables $\mathbf{X}$ not in MB($T$), and thus also for variables $\mathbf{S_{ind}}$.
By applying the weak union graphoid axiom, one can infer that $\forall S_i \in \mathbf{S_{ind}}, \condind{T}{S_i}{MB(T) \cup \mathbf{S_{ind}} \setminus S_i}$ holds, and thus some variable $S_j$ will be removed in the first iteration.
Using the same reasoning and the definition of a Markov blanket, it can be shown that all variables in $\mathbf{S_{ind}}$ will be removed from MB($T$) at some iteration.
To conclude, it suffices to use the fact that variables in MB($T$) will not be removed by the backward selection, as they are not conditionally independent of $T$ given the remaining variables in MB($T$).
\end{proof}

\noindent\textbf{Proof of Theorem~\ref{thm:ffbsinfmb}}
\begin{proof}
In the first run of \fFBS{\infty}, all variables that are adjacent to $T$ (that is, its parents, children and variables connected with $T$ by a bi-directed edge) will be selected, as none of them can be m-separated from $T$ by any set of variables.
After each run additional variables may become admissible for selection.
Specifically, after $k$ runs all variables that are connected with $T$ by a collider path of length $k$ will become m-connected with $T$, and thus will be selected; we prove this next.
Assume that after $k$ runs all variables connected with $T$ by a collider path of length at most $k-1$ have been selected.
By conditioning on all selected variables, all variables that are into some selected variable connected with $T$ by a collider path will become m-connected with $T$.
This is true because conditioning on a variable $Y$ in a collider $\langle X, Y, Z \rangle$ m-connects $X$ and $Z$.
By applying this on each variable on some collider path, it is easy to see that its end-points become m-connected.
Finally, after applying the backward selection phase, all variables that are not in the Markov blanket of $T$ will be removed; the proof is identical to the one used in the proof of Theorem~\ref{thm:ffbs1mb} and thus will be omitted.
\end{proof}

\section{Additional Results} \label{app:results}

\subsection*{Detailed Results of \FBS{} vs \fFBS{K}}
Tables~\ref{tbl:auc}, \ref{tbl:acc} and \ref{tbl:vars} show the detailed results of all algorithms in terms of area under the ROC curve (AUC), classification accuracy (ACC) and number of selected variables respectively.
The results correspond to the ones shown in Figure~\ref{fig:fsvsffs_perf}. 
The values $\gamma = $def corresponds to default value, that is $\gamma = 1 - 0.5 \cdot \log(n) / \log(p)$.
Larger values of $\alpha$ and lower values of $\gamma$ tend to perform better, as they lead to the selection of more variables.
The number of variables selected using the EBIC criterion is usually between the number selected by the IT criterion with $\alpha = 0.001$ and $\alpha = 0.01$, irrespective of the $\gamma$ value used.

\subsection*{LASSO-PM vs Logistic Regression}
Figure~\ref{fig:l1vslr} show the comparison of LASSO-PM and standard logistic regression when used as final predictive models.
The algorithms are compared both, in terms of accuracy and AUC.
Each point corresponds to the difference in AUC and accuracy between LASSO-PM and logistic regression.
It can clearly be seen that LASSO-PM performs better overall.
For \FBS{} and \fFBS{\infty} with IT the differences are larger, as those algorithms selected the most variables overall.
The effect isn't as large for EBIC or \fFBS{0}, which both tend to select fewer variables.
We also computed the Spearman correlation between the number of selected variables and difference in performance by pooling all results together, and found correlations of 0.51 and 0.22 for AUC and accuracy respectively, suggesting that there is a significant positive correlation between number of selected variables and difference in performance between both modeling methods.
Thus, \textit{the more variables are selected, the worse standard logistic regression performs compared to LASSO-PM}.

\subsection*{Timing Results of \FBS{}, \fFBS{K} and LASSO}
Table~\ref{tbl:timing} shows the running time of each feature selection algorithm and configuration, on all datasets.
The values correspond to a single run on the complete dataset.
For LASSO-FS we used two values for the maximum number of $\lambda$ values to try, 100 and 1000.
All runs were performed on a single machine, and no runs were performed simultaneously.
It can clearly be seen that LASSO-FS is the fastest in large datasets, irrespective of the number of $\lambda$ values used.
For smaller datasets (musk, sylva, madelon, secomd, gina and hiva), \fFBS{0} and \fFBS{1} are at least as fast as LASSO-FS, and are often even faster.
Note however that the differences can largely be attributed to the implementations of the algorithms.
For LASSO-FS the glmnet implementation was used, which is highly optimized and written in FORTRAN.
In contrast, for \fFBS{K} and \FBS{} we used a custom logistic regression implementation written in MATLAB.
A difference of 1-2 orders of magnitude can be expected between the same implementation in a low-level language such as FORTRAN, C or C++ and higher-level languages such as MATLAB.
Therefore, we would expect that such an implementation would perform similarly to LASSO-FS.
Of course, LASSO-FS has the advantage that it returns the whole solution path, and thus would still be faster in practice if hyper-parameter optimization is also performed.
\fFBS{K} and \FBS{} can be directly compared to each other, as they both use the same implementations.
We can see that \fFBS{0} is up to 3 orders of magnitude faster than \FBS{} (for example, hiva with IT and $\alpha = 0.1$), and usually around 1-2 orders faster, while \fFBS{\infty} is typically around 1 order of magnitude faster than \FBS{}.

%% AUC TABLE L1
\setlength{\tabcolsep}{.25em}
{\renewcommand{\arraystretch}{1.2}
\begin{table*}[!ht]
\centering
	\caption{Area under the ROC curve of each feature selection, selection criterion and hyper-parameter using LASSO-PM. The first four groups use an independence test with significance level $\alpha$, whereas the last four groups use the EBIC criterion with parameter $\gamma$.}
    \label{tbl:auc}
  \scriptsize
  \begin{tabular}{ccl|*{12}{c}}
    \toprule
    &&
    Algorithm &
    \text{musk} &
    \text{sylva}&
    \text{madelon} &
    \text{secom} &
    \text{gina} &
    \text{hiva} &
    \text{gisette} &
    \text{p53} &
    \text{arcene} &
    \text{nova} &
    \text{dexter} &
    \text{dorothea} 
\\
\midrule
\multirow{4}{*}{\rotatebox{90}{$\alpha$ = 0.001}}
&& \fFBS{0}& 90.8 & 99.9 & 63.4 & 64.4 & 92.4 & 67.6 & 99.2 & 94.1 & 76.2 & 91.0 & 97.0 & 84.6 \\
&& \fFBS{1}& 92.3 & 99.9 & 63.3 & 64.2 & 92.9 & 69.1 & 99.3 & 94.8 & 77.9 & 92.0 & 97.0 & 84.8 \\
&& \fFBS{\infty}& 95.7 & 99.9 & 63.3 & 64.0 & 93.1 & 69.3 & 99.3 & 95.1 & 78.5 & 92.5 & 96.7 & 85.1 \\
&& \FBS{}& 95.4 & 99.9 & 63.3 & 64.0 & 92.9 & 69.3 & 99.4 & 94.4 & 77.6 & 92.4 & 96.3 & 84.5 \\
\hline
\multirow{4}{*}{\rotatebox{90}{$\alpha$ = 0.01}}
&& \fFBS{0}& 91.9 & 99.9 & 63.2 & 67.2 & 92.9 & 69.4 & 99.3 & 94.7 & 78.0 & 92.7 & 97.2 & 84.4 \\
&& \fFBS{1}& 93.9 & 99.9 & 63.1 & 65.3 & 93.3 & 69.3 & 99.3 & 95.3 & 78.4 & 93.9 & 97.1 & 84.7 \\
&& \fFBS{\infty}& 96.1 & 99.9 & 63.1 & 66.2 & 93.4 & 69.0 & 99.3 & 94.7 & 78.4 & 94.2 & 97.1 & 84.7 \\
&& \FBS{}& 96.1 & 99.9 & 63.1 & 66.5 & 93.3 & 69.1 & 99.4 & 94.7 & 77.6 & 93.6 & 96.5 & 84.5 \\
\hline
\multirow{4}{*}{\rotatebox{90}{$\alpha$ = 0.05}}
&& \fFBS{0}& 92.6 & 99.9 & 63.2 & 67.8 & 93.2 & 69.5 & 99.4 & 94.8 & 78.7 & 94.0 & 96.9 & 84.5 \\
&& \fFBS{1}& 94.6 & 99.9 & 63.3 & 66.1 & 93.5 & 69.4 & 99.4 & 94.9 & 78.6 & 94.3 & 96.7 & 85.0 \\
&& \fFBS{\infty}& 96.7 & 99.9 & 63.2 & 67.9 & 93.6 & 69.1 & 99.4 & 94.4 & 78.6 & 94.5 & 96.7 & 85.0 \\
&& \FBS{}& 96.7 & 99.9 & 63.2 & 68.3 & 93.5 & 69.5 & 99.4 & 95.2 & 77.6 & 93.8 & 96.5 & 84.5 \\
\hline
\multirow{4}{*}{\rotatebox{90}{$\alpha$ = 0.1}}
&& \fFBS{0}& 93.0 & 99.9 & 63.2 & 67.7 & 93.4 & 69.7 & 99.4 & 95.5 & 78.5 & 94.4 & 96.8 & 84.9 \\
&& \fFBS{1}& 94.7 & 99.9 & 63.2 & 67.5 & 93.6 & 69.5 & 99.4 & 95.3 & 78.5 & 94.6 & 96.8 & 84.9 \\
&& \fFBS{\infty}& 97.0 & 99.9 & 63.2 & 68.0 & 93.8 & 69.2 & 99.4 & 95.2 & 78.5 & 94.6 & 96.8 & 84.9 \\
&& \FBS{}& 96.9 & 99.9 & 63.2 & 68.6 & 93.6 & 70.6 & 99.4 & 95.2 & 77.6 & 93.8 & 96.5 & 84.5 \\
\hline
\multirow{4}{*}{\rotatebox{90}{$\gamma$ = def}}
&& \fFBS{0}& 90.9 & 99.9 & 63.4 & 64.9 & 92.5 & 67.7 & 99.1 & 93.5 & 71.4 & 88.7 & 95.7 & 81.6 \\
&& \fFBS{1}& 92.7 & 99.9 & 63.4 & 65.6 & 92.9 & 67.7 & 99.2 & 94.0 & 74.7 & 90.0 & 96.3 & 82.9 \\
&& \fFBS{\infty}& 95.8 & 99.9 & 63.4 & 65.6 & 93.0 & 67.7 & 99.3 & 94.6 & 75.3 & 91.5 & 96.5 & 83.0 \\
&& \FBS{}& 95.6 & 99.9 & 63.4 & 65.4 & 92.9 & 67.7 & 99.3 & 95.0 & 74.6 & 91.4 & 96.3 & 83.3 \\
\hline
\multirow{4}{*}{\rotatebox{90}{$\gamma$ = 1}}
&& \fFBS{0}& 90.4 & 99.9 & 63.3 & 63.5 & 92.2 & 67.7 & 99.0 & 92.0 & 68.0 & 86.3 & 93.9 & 77.5 \\
&& \fFBS{1}& 91.3 & 99.9 & 63.2 & 63.1 & 92.5 & 67.7 & 99.1 & 93.5 & 70.8 & 87.5 & 95.5 & 79.1 \\
&& \fFBS{\infty}& 95.4 & 99.9 & 63.2 & 63.1 & 92.9 & 67.7 & 99.1 & 93.6 & 70.1 & 89.0 & 95.8 & 80.2 \\
&& \FBS{}& 95.4 & 99.9 & 63.2 & 63.1 & 92.8 & 67.7 & 99.2 & 93.7 & 70.1 & 89.4 & 95.5 & 80.6 \\
\hline
\multirow{4}{*}{\rotatebox{90}{$\gamma$ = 0.5}}
&& \fFBS{0}& 90.6 & 99.9 & 63.4 & 65.1 & 92.5 & 67.7 & 99.1 & 93.4 & 75.0 & 89.9 & 96.4 & 82.5 \\
&& \fFBS{1}& 92.1 & 99.9 & 63.4 & 65.6 & 92.9 & 67.7 & 99.2 & 94.0 & 76.8 & 90.6 & 96.7 & 82.8 \\
&& \fFBS{\infty}& 96.0 & 99.9 & 63.4 & 65.6 & 93.1 & 67.7 & 99.3 & 94.9 & 77.7 & 91.9 & 96.8 & 83.1 \\
&& \FBS{}& 95.6 & 99.9 & 63.4 & 65.5 & 92.9 & 67.7 & 99.3 & 94.9 & 77.1 & 91.8 & 96.3 & 84.5 \\
\hline
\multirow{4}{*}{\rotatebox{90}{$\gamma$ = 0}}
&& \fFBS{0}& 91.1 & 99.9 & 63.3 & 67.2 & 92.7 & 69.5 & 99.2 & 94.1 & 78.0 & 92.6 & 97.2 & 84.0 \\
&& \fFBS{1}& 93.1 & 99.9 & 63.3 & 65.5 & 93.1 & 69.3 & 99.3 & 95.3 & 78.1 & 93.5 & 97.1 & 84.4 \\
&& \fFBS{\infty}& 96.0 & 99.9 & 63.3 & 66.5 & 93.3 & 69.3 & 99.4 & 95.4 & 78.1 & 93.9 & 97.1 & 84.2 \\
&& \FBS{}& 95.6 & 99.9 & 63.3 & 65.9 & 93.3 & 68.4 & 99.4 & 94.3 & 77.6 & 93.4 & 96.5 & 84.5 \\
\hline
  \end{tabular}
\end{table*}

%% ACCURACY TABLE
\setlength{\tabcolsep}{.25em}
{\renewcommand{\arraystretch}{1.2}
\begin{table*}[!t]
\centering
	\caption{Classification accuracy of each feature selection, selection criterion and hyper-parameter using LASSO-PM. Decisions were made by thresholding probabilities at 50\%. The first four groups use an independence test with significance level $\alpha$, whereas the last four groups use the EBIC criterion with parameter $\gamma$.}
    \label{tbl:acc}
  \scriptsize
  \begin{tabular}{ccl|*{12}{c}}
    \toprule
    &&
    Algorithm &
    \text{musk} &
    \text{sylva}&
    \text{madelon} &
    \text{secom} &
    \text{gina} &
    \text{hiva} &
    \text{gisette} &
    \text{p53} &
    \text{arcene} &
    \text{nova} &
    \text{dexter} &
    \text{dorothea} 
\\
\midrule
\multirow{4}{*}{\rotatebox{90}{$\alpha$ = 0.001}}
&& \fFBS{0}& 89.3 & 99.2 & 60.8 & 93.3 & 85.4 & 96.7 & 96.3 & 99.2 & 68.8 & 90.3 & 90.6 & 92.6 \\
&& \fFBS{1}& 91.7 & 99.2 & 60.4 & 93.3 & 86.1 & 96.7 & 96.6 & 99.2 & 70.8 & 90.7 & 91.2 & 92.6 \\
&& \fFBS{\infty}& 93.3 & 99.2 & 60.4 & 93.3 & 86.1 & 96.7 & 96.9 & 99.2 & 71.0 & 90.6 & 90.4 & 92.2 \\
&& \FBS{}& 93.3 & 99.2 & 60.4 & 93.3 & 86.1 & 96.7 & 97.0 & 99.2 & 70.6 & 91.0 & 89.6 & 92.7 \\
\hline
\multirow{4}{*}{\rotatebox{90}{$\alpha$ = 0.01}}
&& \fFBS{0}& 90.4 & 99.2 & 60.2 & 93.2 & 85.9 & 96.6 & 96.8 & 99.2 & 69.9 & 91.1 & 91.2 & 92.3 \\
&& \fFBS{1}& 92.4 & 99.2 & 60.5 & 93.2 & 86.5 & 96.6 & 96.9 & 99.2 & 70.5 & 91.3 & 90.9 & 92.5 \\
&& \fFBS{\infty}& 93.8 & 99.2 & 60.8 & 93.2 & 86.6 & 96.6 & 97.0 & 99.2 & 70.4 & 91.0 & 90.6 & 92.5 \\
&& \FBS{}& 93.6 & 99.2 & 60.7 & 93.2 & 86.6 & 96.6 & 97.1 & 99.3 & 70.6 & 91.3 & 90.2 & 92.9 \\
\hline
\multirow{4}{*}{\rotatebox{90}{$\alpha$ = 0.05}}
&& \fFBS{0}& 91.3 & 99.2 & 60.6 & 93.1 & 86.4 & 96.7 & 97.1 & 99.3 & 71.0 & 91.3 & 90.6 & 92.8 \\
&& \fFBS{1}& 92.8 & 99.3 & 60.8 & 93.0 & 86.7 & 96.7 & 97.0 & 99.2 & 70.9 & 91.1 & 90.3 & 93.0 \\
&& \fFBS{\infty}& 94.3 & 99.3 & 60.8 & 93.1 & 86.7 & 96.6 & 97.0 & 99.2 & 70.9 & 90.9 & 90.2 & 93.0 \\
&& \FBS{}& 94.1 & 99.3 & 60.8 & 93.2 & 86.9 & 96.7 & 97.1 & 99.2 & 70.6 & 91.2 & 90.3 & 92.9 \\
\hline
\multirow{4}{*}{\rotatebox{90}{$\alpha$ = 0.1}}
&& \fFBS{0}& 91.8 & 99.3 & 60.8 & 93.1 & 86.8 & 96.7 & 97.1 & 99.2 & 70.4 & 91.6 & 90.4 & 92.8 \\
&& \fFBS{1}& 92.9 & 99.3 & 60.9 & 93.1 & 86.9 & 96.7 & 97.2 & 99.2 & 70.4 & 91.4 & 90.3 & 92.8 \\
&& \fFBS{\infty}& 94.5 & 99.3 & 60.9 & 93.1 & 87.0 & 96.7 & 97.2 & 99.2 & 70.4 & 91.2 & 90.3 & 92.8 \\
&& \FBS{}& 94.5 & 99.3 & 60.9 & 93.1 & 86.9 & 96.7 & 97.1 & 99.2 & 70.6 & 91.2 & 90.3 & 92.9 \\
\hline
\multirow{4}{*}{\rotatebox{90}{$\gamma$ = def}}
&& \fFBS{0}& 89.3 & 99.2 & 61.0 & 93.3 & 85.8 & 96.7 & 96.3 & 99.2 & 65.4 & 89.5 & 88.4 & 93.0 \\
&& \fFBS{1}& 92.0 & 99.2 & 60.6 & 93.2 & 86.0 & 96.7 & 96.6 & 99.2 & 67.0 & 90.1 & 90.2 & 92.9 \\
&& \fFBS{\infty}& 93.5 & 99.2 & 60.6 & 93.2 & 86.4 & 96.7 & 96.9 & 99.2 & 68.0 & 90.5 & 90.0 & 92.9 \\
&& \FBS{}& 93.4 & 99.2 & 60.6 & 93.2 & 86.2 & 96.7 & 96.7 & 99.2 & 67.8 & 90.8 & 89.8 & 92.8 \\
\hline
\multirow{4}{*}{\rotatebox{90}{$\gamma$ = 1}}
&& \fFBS{0}& 88.3 & 99.2 & 60.7 & 93.3 & 85.2 & 96.6 & 95.8 & 99.1 & 62.8 & 88.7 & 85.6 & 93.0 \\
&& \fFBS{1}& 91.4 & 99.2 & 60.8 & 93.3 & 85.8 & 96.7 & 96.3 & 99.2 & 65.3 & 89.2 & 87.9 & 92.8 \\
&& \fFBS{\infty}& 93.2 & 99.2 & 60.8 & 93.3 & 86.1 & 96.7 & 96.4 & 99.2 & 65.3 & 89.6 & 89.1 & 92.8 \\
&& \FBS{}& 93.3 & 99.2 & 60.8 & 93.3 & 86.0 & 96.7 & 96.3 & 99.2 & 65.1 & 89.7 & 88.6 & 92.8 \\
\hline
\multirow{4}{*}{\rotatebox{90}{$\gamma$ = 0.5}}
&& \fFBS{0}& 88.6 & 99.2 & 61.1 & 93.3 & 85.7 & 96.7 & 96.2 & 99.2 & 67.0 & 90.1 & 89.7 & 92.6 \\
&& \fFBS{1}& 91.6 & 99.2 & 60.9 & 93.2 & 85.9 & 96.7 & 96.6 & 99.2 & 69.2 & 90.4 & 90.6 & 92.4 \\
&& \fFBS{\infty}& 93.4 & 99.2 & 60.9 & 93.2 & 86.4 & 96.7 & 96.9 & 99.2 & 70.9 & 90.6 & 90.4 & 92.4 \\
&& \FBS{}& 93.4 & 99.2 & 60.9 & 93.2 & 86.1 & 96.7 & 96.7 & 99.2 & 70.1 & 90.8 & 89.8 & 92.6 \\
\hline
\multirow{4}{*}{\rotatebox{90}{$\gamma$ = 0}}
&& \fFBS{0}& 89.5 & 99.2 & 60.5 & 93.2 & 85.7 & 96.7 & 96.6 & 99.2 & 71.9 & 91.1 & 91.0 & 92.4 \\
&& \fFBS{1}& 92.1 & 99.2 & 60.8 & 93.2 & 86.3 & 96.7 & 96.7 & 99.2 & 71.7 & 91.0 & 90.8 & 92.7 \\
&& \fFBS{\infty}& 93.5 & 99.2 & 60.6 & 93.2 & 86.7 & 96.7 & 97.0 & 99.3 & 71.7 & 90.9 & 90.9 & 92.5 \\
&& \FBS{}& 93.3 & 99.2 & 60.7 & 93.2 & 86.6 & 96.7 & 97.1 & 99.2 & 70.6 & 91.4 & 90.2 & 92.9 \\
\hline
  \end{tabular}
\end{table*}

%% #VARS TABLE
\setlength{\tabcolsep}{.25em}
{\renewcommand{\arraystretch}{1.2}
\begin{table*}[!t]
\centering
	\caption{Number of selected variables of each feature selection, selection criterion and hyper-parameter using LASSO-PM. The first four groups use an independence test with significance level $\alpha$, whereas the last four groups use the EBIC criterion with parameter $\gamma$.}
    \label{tbl:vars}
  \scriptsize
  \begin{tabular}{ccl|*{12}{r}}
    \toprule
    &&
    Algorithm &
    \text{musk} &
    \text{sylva}&
    \text{madelon} &
    \text{secom} &
    \text{gina} &
    \text{hiva} &
    \text{gisette} &
    \text{p53} &
    \text{arcene} &
    \text{nova} &
    \text{dexter} &
    \text{dorothea} 
\\
\midrule
\multirow{4}{*}{\rotatebox{90}{$\alpha$ = 0.001}}
&& \fFBS{0}& 12.7 & 10.7 & 2.4 & 2.3 & 20.1 & 4.9 & 38.1 & 11.9 & 3.7 & 29.8 & 10.3 & 8.9 \\
&& \fFBS{1}& 19.1 & 11.8 & 2.7 & 2.7 & 24.8 & 6.1 & 47.3 & 16.1 & 6.3 & 38.1 & 12.5 & 14.4 \\
&& \fFBS{\infty}& 36.1 & 12.2 & 2.7 & 2.9 & 28.1 & 6.9 & 81.9 & 18.4 & 11.1 & 46.3 & 15.4 & 19.4 \\
&& \FBS{}& 33.3 & 12.2 & 2.7 & 2.9 & 26.6 & 6.9 & 77.6 & 17.8 & 10.6 & 45.9 & 15.6 & 19.9 \\
\hline
\multirow{4}{*}{\rotatebox{90}{$\alpha$ = 0.01}}
&& \fFBS{0}& 17.3 & 12.4 & 5.3 & 4.9 & 25.8 & 10.3 & 50.0 & 17.2 & 7.7 & 44.1 & 13.7 & 16.9 \\
&& \fFBS{1}& 26.1 & 14.4 & 6.5 & 7.0 & 33.0 & 14.9 & 68.2 & 24.0 & 11.9 & 59.2 & 17.6 & 22.0 \\
&& \fFBS{\infty}& 47.7 & 15.5 & 6.7 & 8.6 & 39.8 & 20.8 & 86.9 & 34.2 & 12.0 & 72.0 & 21.9 & 23.6 \\
&& \FBS{}& 46.5 & 15.6 & 6.4 & 8.9 & 39.1 & 20.3 & 81.7 & 36.1 & 10.6 & 71.5 & 20.8 & 22.4 \\
\hline
\multirow{4}{*}{\rotatebox{90}{$\alpha$ = 0.05}}
&& \fFBS{0}& 21.8 & 15.7 & 16.1 & 11.5 & 33.2 & 19.6 & 65.0 & 24.0 & 12.5 & 67.9 & 19.6 & 23.5 \\
&& \fFBS{1}& 31.3 & 20.0 & 23.2 & 17.4 & 45.5 & 30.3 & 88.6 & 36.6 & 12.9 & 76.0 & 23.1 & 25.2 \\
&& \fFBS{\infty}& 72.6 & 27.7 & 30.4 & 32.4 & 72.9 & 101.0 & 88.4 & 80.8 & 12.9 & 79.0 & 23.5 & 25.2 \\
&& \FBS{}& 69.5 & 26.4 & 28.8 & 30.1 & 65.8 & 79.0 & 81.7 & 72.8 & 10.6 & 75.2 & 21.5 & 23.2 \\
\hline
\multirow{4}{*}{\rotatebox{90}{$\alpha$ = 0.1}}
&& \fFBS{0}& 23.4 & 19.4 & 31.7 & 16.0 & 39.1 & 28.9 & 74.2 & 29.8 & 11.8 & 73.2 & 22.4 & 24.7 \\
&& \fFBS{1}& 35.2 & 26.7 & 45.9 & 26.0 & 56.7 & 44.9 & 90.5 & 48.6 & 11.8 & 79.4 & 23.5 & 25.4 \\
&& \fFBS{\infty}& 85.9 & 38.1 & 62.0 & 81.7 & 199.1 & 123.3 & 90.5 & 83.1 & 11.8 & 81.3 & 23.6 & 25.4 \\
&& \FBS{}& 80.7 & 36.5 & 62.7 & 84.0 & 167.6 & 118.9 & 81.7 & 77.2 & 10.6 & 76.2 & 21.7 & 23.5 \\
\hline
\multirow{4}{*}{\rotatebox{90}{$\gamma$ = def}}
&& \fFBS{0}& 14.0 & 11.0 & 2.1 & 2.1 & 19.4 & 4.2 & 33.2 & 8.7 & 1.9 & 19.9 & 8.2 & 4.1 \\
&& \fFBS{1}& 20.8 & 12.1 & 2.5 & 2.5 & 24.3 & 4.9 & 41.5 & 11.9 & 2.9 & 25.7 & 10.0 & 5.3 \\
&& \fFBS{\infty}& 39.3 & 12.4 & 2.5 & 2.6 & 27.6 & 4.9 & 58.5 & 14.0 & 4.0 & 31.4 & 10.8 & 5.4 \\
&& \FBS{}& 38.3 & 12.4 & 2.5 & 2.6 & 26.5 & 4.9 & 51.5 & 13.4 & 3.6 & 31.7 & 10.8 & 5.4 \\
\hline
\multirow{4}{*}{\rotatebox{90}{$\gamma$ = 1}}
&& \fFBS{0}& 10.1 & 10.0 & 1.9 & 0.9 & 15.6 & 3.9 & 27.9 & 6.9 & 1.1 & 14.1 & 5.7 & 2.1 \\
&& \fFBS{1}& 14.3 & 10.4 & 1.9 & 1.0 & 19.1 & 4.0 & 33.1 & 8.2 & 2.0 & 16.7 & 7.5 & 3.2 \\
&& \fFBS{\infty}& 33.0 & 10.4 & 1.9 & 1.0 & 21.9 & 4.0 & 39.9 & 8.6 & 2.1 & 20.2 & 8.8 & 3.6 \\
&& \FBS{}& 32.8 & 10.4 & 1.9 & 1.0 & 21.7 & 4.0 & 38.1 & 9.0 & 2.0 & 21.4 & 8.9 & 3.6 \\
\hline
\multirow{4}{*}{\rotatebox{90}{$\gamma$ = 0.5}}
&& \fFBS{0}& 12.2 & 10.4 & 2.0 & 1.9 & 18.9 & 4.1 & 32.9 & 8.4 & 2.6 & 22.7 & 9.2 & 5.4 \\
&& \fFBS{1}& 18.4 & 11.5 & 2.3 & 2.2 & 23.3 & 4.6 & 41.7 & 11.7 & 4.3 & 29.5 & 11.1 & 7.5 \\
&& \fFBS{\infty}& 40.9 & 11.9 & 2.3 & 2.2 & 26.9 & 4.7 & 57.8 & 13.5 & 9.8 & 35.5 & 12.8 & 11.1 \\
&& \FBS{}& 36.5 & 12.0 & 2.3 & 2.2 & 26.1 & 4.7 & 51.3 & 13.1 & 9.4 & 34.5 & 12.9 & 10.6 \\
\hline
\multirow{4}{*}{\rotatebox{90}{$\gamma$ = 0}}
&& \fFBS{0}& 14.6 & 11.1 & 4.1 & 4.7 & 23.1 & 7.4 & 43.7 & 13.1 & 10.5 & 40.7 & 14.5 & 16.3 \\
&& \fFBS{1}& 22.0 & 12.2 & 4.7 & 6.4 & 28.9 & 10.8 & 56.5 & 17.6 & 12.4 & 51.6 & 18.7 & 22.3 \\
&& \fFBS{\infty}& 40.1 & 12.8 & 4.8 & 7.9 & 34.1 & 12.9 & 85.2 & 21.9 & 12.4 & 67.1 & 22.3 & 23.8 \\
&& \FBS{}& 38.9 & 12.7 & 4.8 & 7.9 & 34.0 & 12.9 & 81.7 & 20.5 & 10.6 & 67.4 & 20.9 & 22.4 \\
\hline
  \end{tabular}
\end{table*}

% Running time
\setlength{\tabcolsep}{.25em}
{\renewcommand{\arraystretch}{1.2}
\begin{table*}[!ht]
\centering
	\caption{Running time in seconds taken by each feature selection, selection criterion and hyper-parameter value. LASSO-FS was run with 100 and 1000 $\lambda$ values.}
    \label{tbl:timing}
  \scriptsize
  \begin{tabular}{ccl|*{12}{r}}
    \toprule
    &&
    Algorithm &
    \text{musk} &
    \text{sylva}&
    \text{madelon} &
    \text{secom} &
    \text{gina} &
    \text{hiva} &
    \text{gisette} &
    \text{p53} &
    \text{arcene} &
    \text{nova} &
    \text{dexter} &
    \text{dorothea} 
\\
\midrule
\multirow{4}{*}{\rotatebox{90}{$\alpha$ = 0.001}}
&& \fFBS{0}& 1.9 & 4.1 & 0.2 & 0.3 & 6.3 & 3.4 & 68.2 & 58.5 & 3.1 & 25.1 & 6.6 & 161.7 \\
&& \fFBS{1}& 3.9 & 9.5 & 0.7 & 0.7 & 14.3 & 10.8 & 217.3 & 163.5 & 7.5 & 110.0 & 24.7 & 452.1 \\
&& \fFBS{\infty}& 28.4 & 14.9 & 1.2 & 0.7 & 40.0 & 10.8 & 4191.7 & 429.8 & 39.5 & 787.4 & 87.4 & 2866.9 \\
&& \FBS{}& 54.3 & 44.9 & 1.5 & 1.7 & 179.1 & 26.7 & 6759.6 & 1343.2 & 84.7 & 4290.3 & 392.1 & 8906.9 \\
\hline
\multirow{4}{*}{\rotatebox{90}{$\alpha$ = 0.01}}
&& \fFBS{0}& 2.4 & 5.4 & 0.2 & 0.4 & 10.6 & 4.1 & 103.8 & 76.7 & 3.7 & 46.7 & 7.2 & 180.5 \\
&& \fFBS{1}& 6.4 & 11.2 & 0.7 & 0.9 & 21.4 & 14.2 & 305.0 & 194.8 & 16.9 & 148.8 & 31.1 & 563.9 \\
&& \fFBS{\infty}& 41.5 & 23.8 & 0.7 & 3.3 & 139.0 & 115.0 & 1705.8 & 2280.0 & 35.9 & 1175.9 & 221.0 & 1645.2 \\
&& \FBS{}& 111.9 & 75.0 & 3.2 & 5.7 & 296.2 & 181.7 & 12744.5 & 6066.6 & 84.6 & 10251.0 & 644.7 & 9059.1 \\
\hline
\multirow{4}{*}{\rotatebox{90}{$\alpha$ = 0.05}}
&& \fFBS{0}& 4.0 & 6.9 & 0.5 & 0.5 & 18.9 & 7.2 & 164.3 & 114.2 & 4.9 & 136.1 & 10.4 & 235.8 \\
&& \fFBS{1}& 7.7 & 14.0 & 1.1 & 1.3 & 43.3 & 19.8 & 482.5 & 291.8 & 20.5 & 447.6 & 53.5 & 785.0 \\
&& \fFBS{\infty}& 95.9 & 71.5 & 6.7 & 11.3 & 300.4 & 302.2 & 1370.5 & 12517.7 & 19.7 & 1449.2 & 102.4 & 1353.3 \\
&& \FBS{}& 232.5 & 134.8 & 24.0 & 20.2 & 1504.9 & 2385.9 & 12842.0 & 39944.9 & 85.4 & 11693.2 & 793.8 & 8918.9 \\
\hline
\multirow{4}{*}{\rotatebox{90}{$\alpha$ = 0.1}}
&& \fFBS{0}& 5.0 & 9.4 & 1.0 & 0.8 & 26.1 & 12.6 & 252.5 & 149.3 & 5.9 & 199.9 & 15.8 & 282.3 \\
&& \fFBS{1}& 10.6 & 21.5 & 2.2 & 2.5 & 60.6 & 41.7 & 1014.7 & 429.2 & 20.4 & 507.6 & 66.4 & 831.4 \\
&& \fFBS{\infty}& 107.3 & 94.1 & 10.9 & 54.5 & 1157.6 & 320.5 & 1748.6 & 6828.9 & 20.4 & 1466.4 & 121.5 & 837.6 \\
&& \FBS{}& 308.5 & 302.6 & 60.1 & 37.8 & 4488.9 & 2379.0 & 12837.2 & 40057.6 & 86.2 & 12423.0 & 795.4 & 9482.3 \\
\hline
\multirow{4}{*}{\rotatebox{90}{$\gamma$ = def}}
&& \fFBS{0}& 1.9 & 4.2 & 0.2 & 0.3 & 5.1 & 2.7 & 49.0 & 47.5 & 2.7 & 15.8 & 6.3 & 143.2 \\
&& \fFBS{1}& 4.9 & 9.5 & 0.7 & 0.6 & 13.0 & 6.2 & 115.3 & 136.7 & 6.3 & 82.1 & 24.4 & 349.6 \\
&& \fFBS{\infty}& 33.3 & 15.1 & 1.2 & 1.0 & 30.5 & 9.6 & 503.6 & 590.0 & 10.0 & 414.8 & 114.3 & 1258.2 \\
&& \FBS{}& 92.6 & 44.9 & 1.5 & 1.7 & 179.8 & 19.0 & 3938.1 & 918.3 & 37.4 & 2608.6 & 138.4 & 2290.1 \\
\hline
\multirow{4}{*}{\rotatebox{90}{$\gamma$ = 1}}
&& \fFBS{0}& 1.3 & 3.3 & 0.2 & 0.3 & 3.0 & 2.3 & 35.2 & 40.5 & 2.5 & 13.5 & 6.2 & 143.4 \\
&& \fFBS{1}& 2.5 & 8.3 & 0.7 & 0.6 & 10.1 & 5.8 & 137.2 & 109.3 & 5.8 & 65.4 & 20.5 & 300.0 \\
&& \fFBS{\infty}& 34.8 & 8.2 & 0.7 & 0.6 & 48.4 & 5.8 & 827.4 & 256.1 & 13.5 & 456.5 & 92.6 & 508.9 \\
&& \FBS{}& 71.9 & 34.4 & 1.1 & 0.6 & 120.1 & 15.5 & 2181.4 & 596.3 & 16.8 & 1129.6 & 137.8 & 817.1 \\
\hline
\multirow{4}{*}{\rotatebox{90}{$\gamma$ = 0.5}}
&& \fFBS{0}& 1.7 & 3.6 & 0.2 & 0.3 & 4.8 & 2.6 & 48.4 & 46.3 & 2.8 & 17.0 & 6.3 & 143.6 \\
&& \fFBS{1}& 3.8 & 8.8 & 0.7 & 0.6 & 12.5 & 6.1 & 114.7 & 133.7 & 6.5 & 85.3 & 24.6 & 413.4 \\
&& \fFBS{\infty}& 29.5 & 14.1 & 0.7 & 1.4 & 28.8 & 9.5 & 501.7 & 353.9 & 41.5 & 515.4 & 117.2 & 1041.8 \\
&& \FBS{}& 81.2 & 44.9 & 1.0 & 1.7 & 144.9 & 19.0 & 3917.7 & 918.1 & 84.6 & 2715.1 & 252.0 & 3277.8 \\
\hline
\multirow{4}{*}{\rotatebox{90}{$\gamma$ = 0}}
&& \fFBS{0}& 2.1 & 4.4 & 0.2 & 0.4 & 8.2 & 3.5 & 78.9 & 61.9 & 3.9 & 37.6 & 7.2 & 155.7 \\
&& \fFBS{1}& 5.4 & 9.8 & 0.7 & 0.9 & 17.7 & 12.4 & 250.4 & 169.0 & 14.7 & 139.0 & 30.9 & 561.0 \\
&& \fFBS{\infty}& 43.6 & 15.3 & 1.2 & 3.1 & 70.1 & 31.8 & 2573.7 & 594.8 & 30.3 & 1125.2 & 221.3 & 1641.6 \\
&& \FBS{}& 92.9 & 50.6 & 2.6 & 4.5 & 187.7 & 90.9 & 12762.1 & 1461.7 & 84.7 & 9702.6 & 645.5 & 8955.9 \\
\hline
&& LASSO-FS$^{100}$& 10.9 & 10.7 & 3.7 & 6.3 & 14.6 & 15.4 & 8.3 & 56.5 & 0.2 & 2.8 & 0.7 & 9.4 \\
&& LASSO-FS$^{1000}$& 11.1 & 23.8 & 3.9 & 8.8 & 25.3 & 47.1 & 39.2 & 153.2 & 1.6 & 21.2 & 5.8 & 89.0 \\
\hline
  \end{tabular}
\end{table*}

\begin{figure}[t!]
\begin{subfigure}[t]{0.475\textwidth}
\centering
\includegraphics[width=\textwidth]{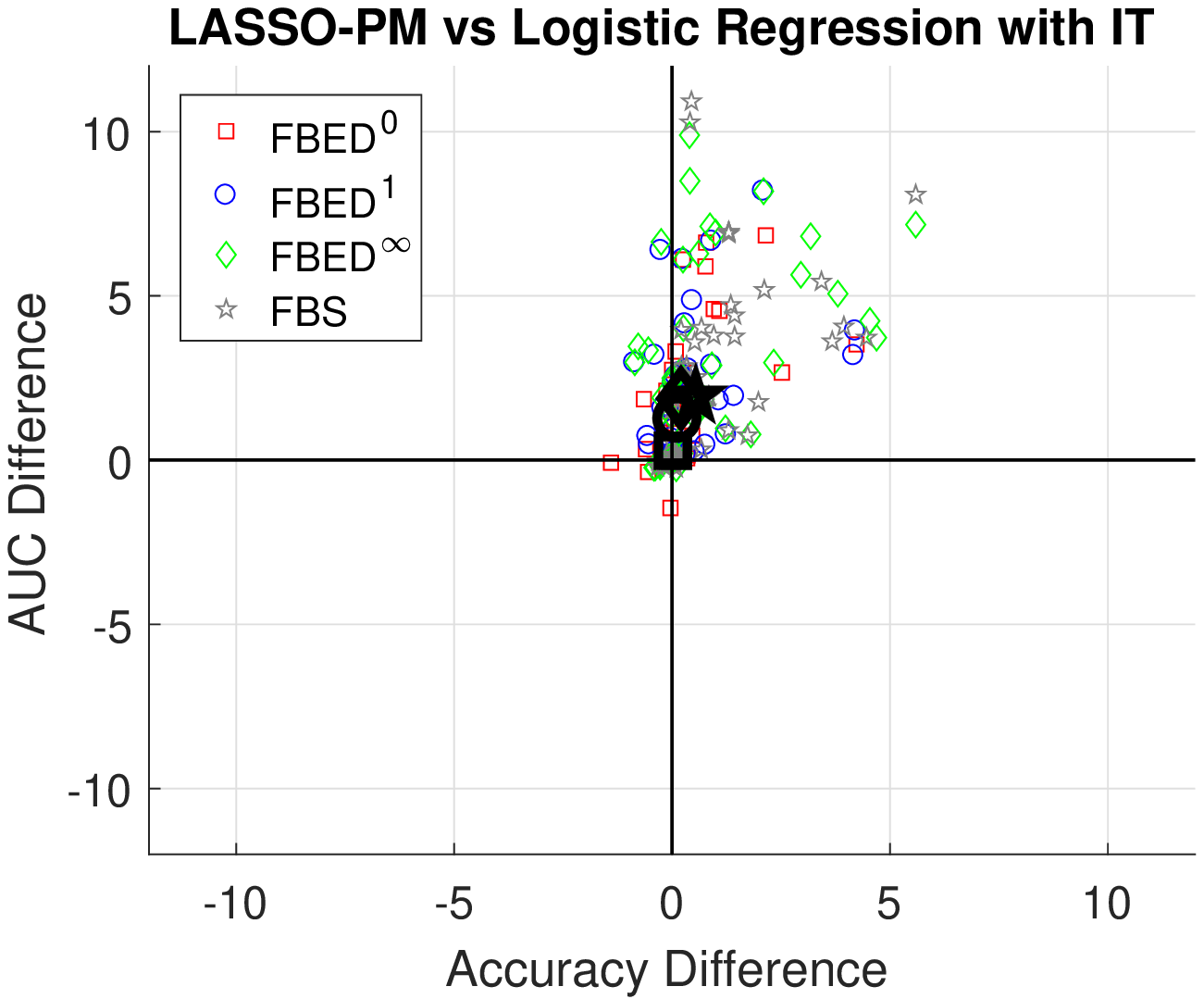}
\end{subfigure}
~
\begin{subfigure}[t]{0.475\textwidth}
\centering
\includegraphics[width=\textwidth]{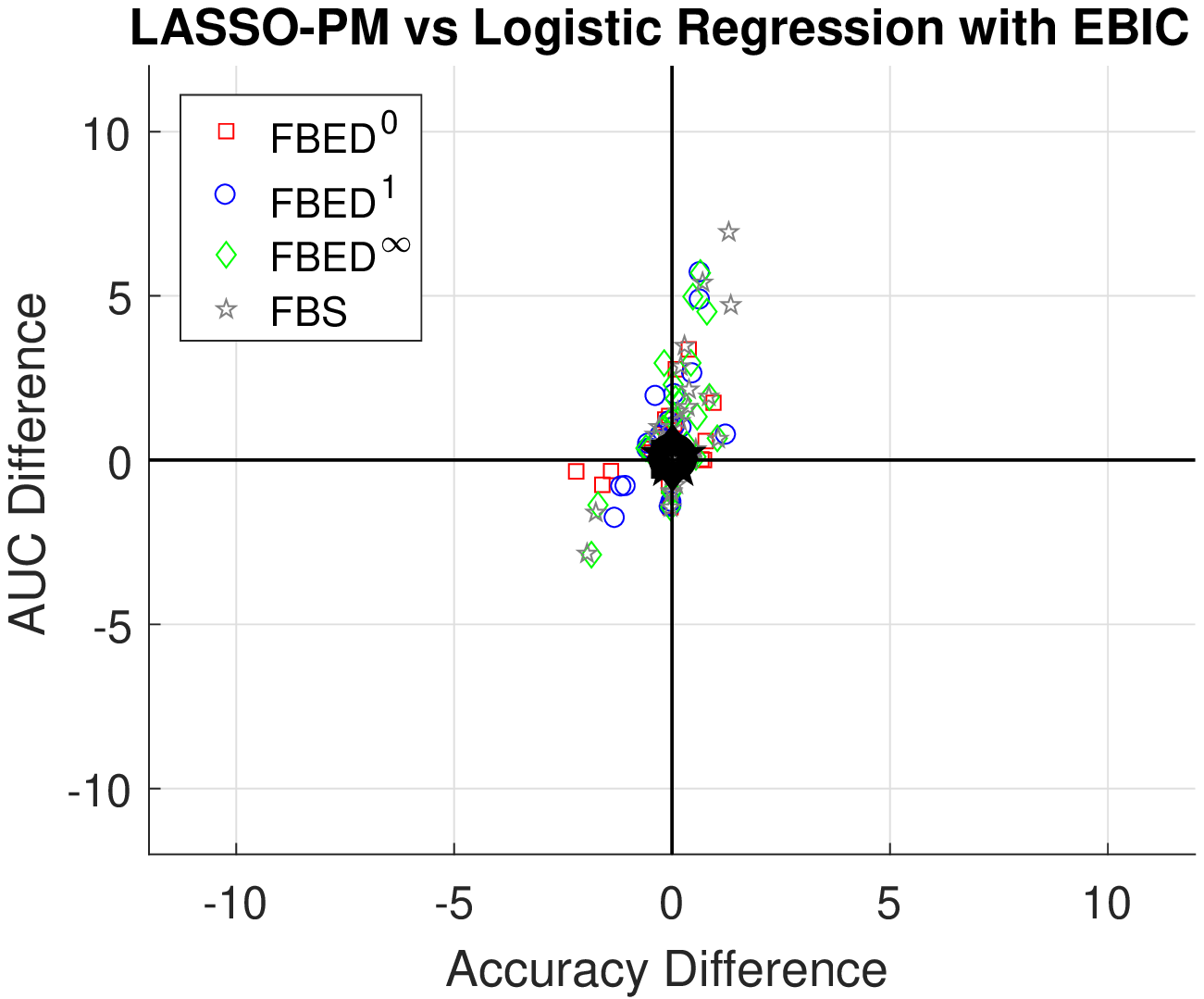}
\end{subfigure}
\caption{
\textbf{LASSO-PM vs Logistic Regression:} 
The x-axis and y-axis show the difference in accuracy and AUC respectively when using LASSO-PM and logistic regression as a predictive model, with positive values indicating that LASSO-PM performs better.
Black points show the median AUC and accuracy for each algorithm.
LASSO-PM performs at least as good as logistic regression, with LASSO-PM performing better when more variables are selected (\FBS{} or \fFBS{\infty} with IT).
}
\label{fig:l1vslr}
\end{figure}

\clearpage
\bibliographystyle{abbrv}
{
\small
\bibliography{ref}

\begin{thebibliography}{44}
\providecommand{\natexlab}[1]{#1}
\providecommand{\url}[1]{\texttt{#1}}
\expandafter\ifx\csname urlstyle\endcsname\relax
  \providecommand{\doi}[1]{doi: #1}\else
  \providecommand{\doi}{doi: \begingroup \urlstyle{rm}\Url}\fi

\bibitem[Akaike(1973)]{Akaike1973}
Hirotogu Akaike.
\newblock {Information theory and an extension of the maximum likelihood
  principle}.
\newblock In \emph{Second International Symposium on Information Theory}, pages
  267--281, Budapest, 1973. Akad\'{e}miai Kiado.

\bibitem[Aliferis et~al.(2003)Aliferis, Tsamardinos, and
  Statnikov]{Aliferis2003HITON}
Constantin~F Aliferis, Ioannis Tsamardinos, and Alexander Statnikov.
\newblock Hiton: a novel markov blanket algorithm for optimal variable
  selection.
\newblock In \emph{AMIA Annual Symposium Proceedings}, volume 2003, page~21.
  American Medical Informatics Association, 2003.

\bibitem[Aliferis et~al.(2010)Aliferis, Statnikov, Tsamardinos, Mani, and
  Koutsoukos]{Aliferis2010JMLR}
Constantin~F Aliferis, Alexander Statnikov, Ioannis Tsamardinos, Subramani
  Mani, and Xenofon~D Koutsoukos.
\newblock Local causal and markov blanket induction for causal discovery and
  feature selection for classification part i: Algorithms and empirical
  evaluation.
\newblock \emph{Journal of Machine Learning Research}, 11\penalty0
  (Jan):\penalty0 171--234, 2010.

\bibitem[Chen and Chen(2008)]{EBIC2008}
Jiahua Chen and Zehua Chen.
\newblock Extended bayesian information criteria for model selection with large
  model spaces.
\newblock \emph{Biometrika}, 95\penalty0 (3):\penalty0 759--771, 2008.

\bibitem[Danziger et~al.(2006)Danziger, Swamidass, Zeng, Dearth, Lu, Chen,
  Cheng, Hoang, Saigo, Luo, et~al.]{Danziger2006}
Samuel~A Danziger, S~Joshua Swamidass, Jue Zeng, Lawrence~R Dearth, Qiang Lu,
  Jonathan~H Chen, Jianlin Cheng, Vinh~P Hoang, Hiroto Saigo, Ray Luo, et~al.
\newblock Functional census of mutation sequence spaces: the example of p53
  cancer rescue mutants.
\newblock \emph{IEEE/ACM Transactions on Computational Biology and
  Bioinformatics (TCBB)}, 3\penalty0 (2):\penalty0 114--125, 2006.

\bibitem[Dietterich et~al.(1994)Dietterich, Jain, Lathrop, and
  Lozano-Perez]{Dietterich1994}
Thomas~G Dietterich, Ajay~N Jain, Richard~H Lathrop, and Tomas Lozano-Perez.
\newblock A comparison of dynamic reposing and tangent distance for drug
  activity prediction.
\newblock \emph{Advances in Neural Information Processing Systems}, pages
  216--216, 1994.

\bibitem[Fan et~al.(2010)Fan, Feng, Wu, et~al.]{Fan2010}
Jianqing Fan, Yang Feng, Yichao Wu, et~al.
\newblock High-dimensional variable selection for cox's proportional hazards
  model.
\newblock In \emph{Borrowing Strength: Theory Powering Applications--A
  Festschrift for Lawrence D. Brown}, pages 70--86. Institute of Mathematical
  Statistics, 2010.

\bibitem[Finos et~al.(2010)Finos, Brombin, and Salmaso]{Finos2010}
Livio Finos, Chiara Brombin, and Luigi Salmaso.
\newblock Adjusting stepwise p-values in generalized linear models.
\newblock \emph{Communications in Statistics-Theory and Methods}, 39\penalty0
  (10):\penalty0 1832--1846, 2010.

\bibitem[Flom and Cassell(2007)]{Flom2007}
Peter~L Flom and David~L Cassell.
\newblock Stopping stepwise: Why stepwise and similar selection methods are
  bad, and what you should use.
\newblock In \emph{NorthEast SAS Users Group Inc 20th Annual Conference}, 2007.

\bibitem[Foutz and Srivastava(1977)]{Foutz1977}
Robert~V. Foutz and R.~C. Srivastava.
\newblock The performance of the likelihood ratio test when the model is
  incorrect.
\newblock \emph{The Annals of Statistics}, 5\penalty0 (6):\penalty0 1183--1194,
  1977.

\bibitem[Friedman et~al.(2010)Friedman, Hastie, and Tibshirani]{Friedman2010}
Jerome Friedman, Trevor Hastie, and Rob Tibshirani.
\newblock Regularization paths for generalized linear models via coordinate
  descent.
\newblock \emph{Journal of statistical software}, 33\penalty0 (1):\penalty0 1,
  2010.

\bibitem[Geer et~al.(2011)Geer, B{\"u}hlmann, and Schelldorfer]{VanDeGeer2011}
Sara van~de Geer, Peter B{\"u}hlmann, and J{\"u}rg Schelldorfer.
\newblock Estimation for high-dimensional linear mixed-effects models using
  l1-penalization.
\newblock \emph{Scandinavian Journal of Statistics}, 38\penalty0 (2):\penalty0
  197--214, 2011.

\bibitem[Guyon and Elisseeff(2003)]{Guyon2003}
Isabelle Guyon and Andr{\'e} Elisseeff.
\newblock An introduction to variable and feature selection.
\newblock \emph{Journal of machine learning research}, 3\penalty0
  (Mar):\penalty0 1157--1182, 2003.

\bibitem[Guyon et~al.(2004)Guyon, Gunn, Ben-Hur, and Dror]{Guyon2004}
Isabelle Guyon, Steve Gunn, Asa Ben-Hur, and Gideon Dror.
\newblock Result analysis of the nips 2003 feature selection challenge.
\newblock In \emph{Advances in neural information processing systems}, pages
  545--552, 2004.

\bibitem[Guyon et~al.(2006)Guyon, Alamdari, Dror, and Buhmann]{Guyon2006}
Isabelle Guyon, Amir Reza Saffari~Azar Alamdari, Gideon Dror, and Joachim~M
  Buhmann.
\newblock Performance prediction challenge.
\newblock In \emph{The 2006 IEEE International Joint Conference on Neural
  Network Proceedings}, pages 1649--1656. IEEE, 2006.

\bibitem[Harrell(2001)]{RegressionModellingStrategies2001}
Frank Harrell.
\newblock \emph{{Regression Modeling Strategies}}.
\newblock Springer, corrected edition, January 2001.

\bibitem[Hastie et~al.(2009)Hastie, Tibshirani, and
  Friedman]{ElementsOfStatisticalLearning2009}
Trevor Hastie, Robert Tibshirani, and Jerome Friedman.
\newblock \emph{The elements of statistical learning: data mining, inference
  and prediction}.
\newblock Springer, 2 edition, 2009.

\bibitem[Hwang and Hu(2015)]{Hwang2015}
Jing-Shiang Hwang and Tsuey-Hwa Hu.
\newblock A stepwise regression algorithm for high-dimensional variable
  selection.
\newblock \emph{Journal of Statistical Computation and Simulation}, 85\penalty0
  (9):\penalty0 1793--1806, 2015.

\bibitem[Ivanoff et~al.(2016)Ivanoff, Picard, and Rivoirard]{Ivanoff2016}
St{\'e}phane Ivanoff, Franck Picard, and Vincent Rivoirard.
\newblock Adaptive lasso and group-lasso for functional poisson regression.
\newblock \emph{J. Mach. Learn. Res.}, 17\penalty0 (1):\penalty0 1903--1948,
  January 2016.
\newblock ISSN 1532-4435.

\bibitem[John et~al.(1994)John, Kohavi, and Pfleger]{John94}
George~H. John, Ron Kohavi, and Karl Pfleger.
\newblock Irrelevant features and the subset selection problem.
\newblock In \emph{Machine learning: Proceedings of the Eleventh International
  Conference}, pages 121--129. Morgan Kaufmann, 1994.

\bibitem[Kutner et~al.(2004)Kutner, Nachtsheim, Neter, and
  Li]{AppliedLinearStatisticalModels}
Michael~H. Kutner, Christopher~J. Nachtsheim, John Neter, and William Li.
\newblock \emph{{Applied Linear Statistical Models}}.
\newblock McGraw-Hill/Irwin, 5th edition, August 2004.

\bibitem[Lagani et~al.(2016)Lagani, Athineou, Farcomeni, Tsagris, and
  Tsamardinos]{Lagani2016}
Vincenzo Lagani, Giorgos Athineou, Alessio Farcomeni, Michail Tsagris, and
  Ioannis Tsamardinos.
\newblock Feature selection with the r package mxm: Discovering
  statistically-equivalent feature subsets.
\newblock \emph{arXiv preprint arXiv:1611.03227}, 2016.

\bibitem[Lv and Liu(2014)]{Jinchi2014}
Jinchi Lv and Jun~S. Liu.
\newblock Model selection principles in misspecified models.
\newblock \emph{Journal of the Royal Statistical Society: Series B (Statistical
  Methodology)}, 76\penalty0 (1):\penalty0 141--167, 2014.

\bibitem[Margaritis(2009)]{Margaritis2009}
Dimitris Margaritis.
\newblock Toward provably correct feature selection in arbitrary domains.
\newblock In \emph{Advances in Neural Information Processing Systems}, pages
  1240--1248, 2009.

\bibitem[Margaritis and Thrun(2000)]{Margaritis2000}
Dimitris Margaritis and Sebastian Thrun.
\newblock Bayesian network induction via local neighborhoods.
\newblock In S.~A. Solla, T.~K. Leen, and K.~M\"{u}ller, editors,
  \emph{Advances in Neural Information Processing Systems 12}, pages 505--511.
  MIT Press, 2000.

\bibitem[Meier et~al.(2008)Meier, Geer, and B{\"u}hlmann]{Meier2008}
Lukas Meier, Sara Van~De Geer, and Peter B{\"u}hlmann.
\newblock The group lasso for logistic regression.
\newblock \emph{Journal of the Royal Statistical Society, Series B}, 2008.

\bibitem[Ng(2004)]{Ng2004}
Andrew~Y Ng.
\newblock Feature selection, l1 vs. l2 regularization, and rotational
  invariance.
\newblock In \emph{Proceedings of the twenty-first international conference on
  Machine learning}, page~78. ACM, 2004.

\bibitem[Pearl(1988)]{Pearl1988}
Judea Pearl.
\newblock \emph{Probabilistic Reasoning in Intelligent Systems: Networks of
  Plausible Inference}.
\newblock Morgan Kaufmann Publishers Inc., San Francisco, CA, USA, 1988.

\bibitem[Pearl(2000)]{Pearl2000}
Judea Pearl.
\newblock \emph{Causality, Models, Reasoning, and Inference}.
\newblock Cambridge University Press, Cambridge, U.K., 2000.

\bibitem[Qian et~al.(2013)Qian, Hastie, Friedman, Tibshirani, and
  Simon]{glmnet}
Junyang Qian, Ttrevor Hastie, Jerome Friedman, Rob Tibshirani, and Noah Simon.
\newblock Glmnet for matlab, 2013.

\bibitem[Richardson(2003)]{Richardson2003}
Thomas Richardson.
\newblock Markov properties for acyclic directed mixed graphs.
\newblock \emph{Scandinavian Journal of Statistics}, 30\penalty0 (1):\penalty0
  145--157, 2003.

\bibitem[Richardson and Spirtes(2002)]{SpirtesRichardson2002}
Thomas Richardson and Peter Spirtes.
\newblock Ancestral graph markov models.
\newblock \emph{Annals of Statistics}, pages 962--1030, 2002.

\bibitem[Schwarz et~al.(1978)]{Schwarz1978}
Gideon Schwarz et~al.
\newblock Estimating the dimension of a model.
\newblock \emph{The annals of statistics}, 6\penalty0 (2):\penalty0 461--464,
  1978.

\bibitem[Spirtes et~al.(2000)Spirtes, Glymour, and Scheines]{Spirtes2000}
Peter Spirtes, Clark~N Glymour, and Richard Scheines.
\newblock \emph{Causation, prediction, and search}.
\newblock MIT press, 2nd edition, 2000.

\bibitem[Tibshirani(1996)]{Tibshirani1996}
Robert Tibshirani.
\newblock Regression shrinkage and selection via the lasso.
\newblock \emph{Journal of the Royal Statistical Society. Series B
  (Methodological)}, pages 267--288, 1996.

\bibitem[Tsamardinos and Aliferis(2003)]{Tsamardinos2003}
Ioannis Tsamardinos and Constantin~F Aliferis.
\newblock Towards principled feature selection: relevancy, filters and
  wrappers.
\newblock In \emph{Proceedings of the Ninth International Workshop on
  Artificial Intelligence and Statistics}, 2003.

\bibitem[Tsamardinos et~al.(2003{\natexlab{a}})Tsamardinos, Aliferis, and
  Statnikov]{Tsamardinos2003MMPC}
Ioannis Tsamardinos, Constantin~F Aliferis, and Alexander Statnikov.
\newblock Time and sample efficient discovery of markov blankets and direct
  causal relations.
\newblock In \emph{Proceedings of the Ninth ACM SIGKDD international conference
  on Knowledge discovery and data mining}, pages 673--678. ACM,
  2003{\natexlab{a}}.

\bibitem[Tsamardinos et~al.(2003{\natexlab{b}})Tsamardinos, Aliferis, and
  Statnikov]{Tsamardinos2003IAMB}
Ioannis Tsamardinos, Constantin~F Aliferis, and Alexander~R Statnikov.
\newblock Algorithms for large scale markov blanket discovery.
\newblock In \emph{FLAIRS conference}, volume~2, 2003{\natexlab{b}}.

\bibitem[Verma and Pearl(1988)]{Verma1988}
Thomas Verma and Judea Pearl.
\newblock {Causal Networks: Semantics and Expressiveness}.
\newblock In \emph{in Proceedings, 4th Workshop on Uncertainty in Artificial
  Intelligence}, pages 352--359, August 1988.

\bibitem[Vuong(1989)]{Vuong1989}
Quang~H Vuong.
\newblock Likelihood ratio tests for model selection and non-nested hypotheses.
\newblock \emph{Econometrica: Journal of the Econometric Society}, pages
  307--333, 1989.

\bibitem[Weisberg(2005)]{WeisbergAppliedLinearRegression}
Sanford Weisberg.
\newblock \emph{Applied linear regression}, volume 528.
\newblock John Wiley \& Sons, 2005.

\bibitem[White(1982)]{White1982}
Halbert White.
\newblock Maximum likelihood estimation of misspecified models.
\newblock \emph{Econometrica}, 50\penalty0 (1):\penalty0 1--25, 1982.

\bibitem[Wilks(1938)]{Wilks1938}
Samuel~S. Wilks.
\newblock The large-sample distribution of the likelihood ratio for testing
  composite hypotheses.
\newblock \emph{The Annals of Mathematical Statistics}, 9\penalty0
  (1):\penalty0 60--62, 03 1938.

\bibitem[Zhang et~al.(2011)Zhang, Peters, Janzing, and
  Sch{\"o}lkopf]{Zhang2011}
Kun Zhang, Jonas Peters, Dominik Janzing, and Bernhard Sch{\"o}lkopf.
\newblock Kernel-based conditional independence test and application in causal
  discovery.
\newblock In \emph{Proceedings of the Twenty-Seventh Conference on Uncertainty
  in Artificial Intelligence}, pages 804--813, 2011.

\end{thebibliography}
}
\end{document}